\documentclass[11pt]{article}

\usepackage[preprint]{acl}

\usepackage{amsmath}
\usepackage{amssymb}
\usepackage{amsfonts}
\usepackage{mathtools}
\usepackage{amsthm}
\usepackage{bbm}

\newtheorem{theorem}{Theorem}
\newtheorem{lemma}{Lemma}

\newtheorem{definition}{Definition}

\newtheorem{assumption}{Assumption}

\usepackage{algorithm}
\usepackage{algpseudocode}
\usepackage{array}
\usepackage{booktabs}
\usepackage{multirow}
\usepackage{graphicx}
\usepackage{enumitem}
\usepackage{xspace}
\usepackage{xcolor}
\usepackage[table]{xcolor}

\usepackage{tabularx}
\usepackage{array}

\usepackage{times}
\usepackage{latexsym}

\newcommand{\sig}[1]{\textsuperscript{#1}}

\usepackage[T1]{fontenc}
\usepackage[utf8]{inputenc}

\usepackage{microtype}

\usepackage{inconsolata}

\usepackage{graphicx}

\usepackage{tcolorbox}
\tcbuselibrary{breakable,skins}
\usepackage{xcolor}
\usepackage{booktabs}

\newtcolorbox{roundbox}[2]{
  colback=blue!3,
  colframe=blue!40,
  fonttitle=\bfseries,
  title=#1,
  breakable,
  enhanced,
  sharp corners,
  boxrule=0.6pt,
  left=6pt,
  right=6pt,
  top=4pt,
  bottom=4pt
}

\newtcolorbox{prompt}{
    enhanced,
    breakable,
    colback=gray!10,
    colframe=gray!50,
    boxrule=0.5pt,
    arc=2pt,
    left=5pt, right=5pt, top=5pt, bottom=5pt,
    before skip=0pt, after skip=0pt,
    fontupper=\small\ttfamily,
    before upper={\sloppy\emergencystretch=3em\hbadness=10000\relax},
}

\newcolumntype{A}{>{\columncolor{blue!6}}c}

\title{PEAR: Permutation-Equivariant Adaptive Routing Multi-Agent Debate}

 \author{%
     Yang Feng\\
     University of Edinburgh\\
     \texttt{Y.Feng-85@sms.ed.ac.uk}
     \And
     Ziwei Xu \\
     FAR.AI \\
     \texttt{ziwei.xu@u.nus.edu}\\
     \AND
     \textbf{Xia Hu} \\
     Shanghai AI Laboratory \\
     \texttt{huxia@pjlab.org.cn}
     \And
     Fengxiang He \\
     University of Edinburgh \\
     \texttt{fhe@ed.ac.uk} 
 }

\begin{document}
\maketitle
\begin{abstract}
% Multi-agent debate can improve the reliability of large language models (LLMs) by allowing agents to critique and revise one another's reasoning. However, fixed communication structures can introduce persistent positional advantages, amplify unreliable agents, and make outcomes sensitive to arbitrary role assignments. 
Multi-agent debate improves the reliability of large language models (LLMs) through iterative peer critiques. However, fixed topologies often introduce persistent positional biases, amplify unreliable agents, and cause high sensitivity to role assignments.
We introduce \textit{Permutation-Equivariant Adaptive Routing Multi-Agent Debate (PEAR)}, an inference-time train-free protocol that dynamically reconfigures communication roles and sparse topologies across consecutive debate rounds.
By strategically switching agent-to-role assignments based on evolving agent states, PEAR prevents any agent from permanently occupying a privileged network position or distributes influence more evenly across the debate. We theoretically characterize PEAR as an equivariant sparse router: it preserves accuracy under agent relabeling while reducing routing complexity and improving generalization. Comprehensive empirical evaluations across four reasoning benchmarks and six diverse LLM backbones demonstrate PEAR significantly improves average accuracy over the strongest debate baselines. The code is available at \url{https://github.com/EVIEHub/PEAR}.
%The code is available at \url{https://anonymous.4open.science/r/PEAR-C8AB/README.md}.
%Multi-LLM debate is a promising approach to improve language-model reliability by letting multiple agents critique and refine each other’s reasoning instead of trusting a single run. However, the performance can hinge on fixed communication topologies, early-message anchoring, dominant-speaker effects, and static role assignments that create persistent structural advantages. We address these issues with Itinerant Multi-Agent Debate (IMAD), a protocol that randomly shuffles the inter-agent network topology across rounds, continually changing who interacts with whom and how influence propagates. We formalize IMAD as a stochastic process over communication graphs and prove the shuffle both realizes topology symmetry and acts as a projection operator from the original hypothesis space to its permutation-equivariant counterpart. We show this projection is a concentration operator in terms of covering number, effectively shrinking hypothesis complexity and yielding better generalisability; these underpins practical outcomes: reduced positional bias, lower sensitivity to agent ordering and roles, and improved robustness to noisy or low-quality agents by dispersing their influence. We empirically validate IMAD. %on GSM8K, MATH, MMLU, TruthfulQA, and HotpotQA, where IMAD consistently improves accuracy and calibration while substantially reducing seed- and permutation-induced instability over the state-of-the-art methods.
\end{abstract}

\section{Introduction}
\label{sec:introduction}
% Multi-agent debate has emerged as a promising inference-time paradigm for improving the reliability of large language models (LLMs). Rather than relying on a single response, debate protocols instantiate multiple agents that propose, critique, and revise candidate solutions through iterative interaction. Such deliberation can expose inconsistencies, surface alternative reasoning paths, and help correct errors before a final answer is selected.
Multi-agent debate (MAD) has emerged as a promising inference-time paradigm for enhancing the reasoning capabilities of large language models (LLMs)~\cite{chen2024reconcile:mad-1,du2024multiagentdebate:mad-2}, in which multiple agents independently generate responses and iteratively revise their answers through structured peer critique. By aggregating diverse reasoning paths across agents and rounds, MAD has demonstrated consistent gains on knowledge-intensive and mathematical reasoning benchmarks~\cite{liang2024encouraging:mad-3}.

However, MAD outcomes depend not only on agent capabilities but also on the communication structure through which critiques flow. Most existing protocols use a fixed topology, such as a chain, star, or ring, across all rounds~\cite{liang2024encouraging:mad-3}, which can introduce persistent positional advantages, amplify unreliable agents, and make outcomes sensitive to arbitrary role assignments. These structures create two related failure modes. First, they promote \textit{information homogenization}: agents repeatedly exchange critiques within the same predetermined neighborhoods, so errors can be reinforced rather than challenged. Even when diverse viewpoints are present~\cite{zhu2026demystifying}, a fixed topology may prevent them from reaching the agents who would benefit most. Second, they enable \textit{error cascades}: hubs, early speakers, or otherwise privileged agents can exert disproportionate influence, anchoring the group on incorrect conclusions when their own answers are wrong~\cite{zhang2026key,tian2026multi}. Prior work identifies these symptoms, but leaves the underlying routing structure largely fixed.

% To address these limitations, we introduce {Itinerant Multi-Agent Debate (IMAD)}, a protocol family that varies agent-role assignments across debate rounds. The simplest member of this family, \emph{uniform IMAD}, fixes a base role graph but samples a fresh random assignment of agents to roles in each round. This converts persistent positional privilege into temporary and distributed influence: an agent may occupy a central or high-information role in one round, but is unlikely to retain that role throughout the entire debate.
To address these limitations, we introduce {Permutation-Equivariant Adaptive Routing Multi-Agent Debate (PEAR)}, an inference-time train-free framework that replaces fixed communication topologies with state-aware adaptive routing. Rather than fixing the communication graph across rounds, PEAR dynamically selects a sparse communication topology at each debate round based on the current state of all agents, including their answers, self-reported confidence scores, and accumulated influence from prior rounds. Figure~\ref{fig:pipeline} illustrates the pipeline.

\begin{figure*}[t]
    \centering
    \includegraphics[width=\linewidth]{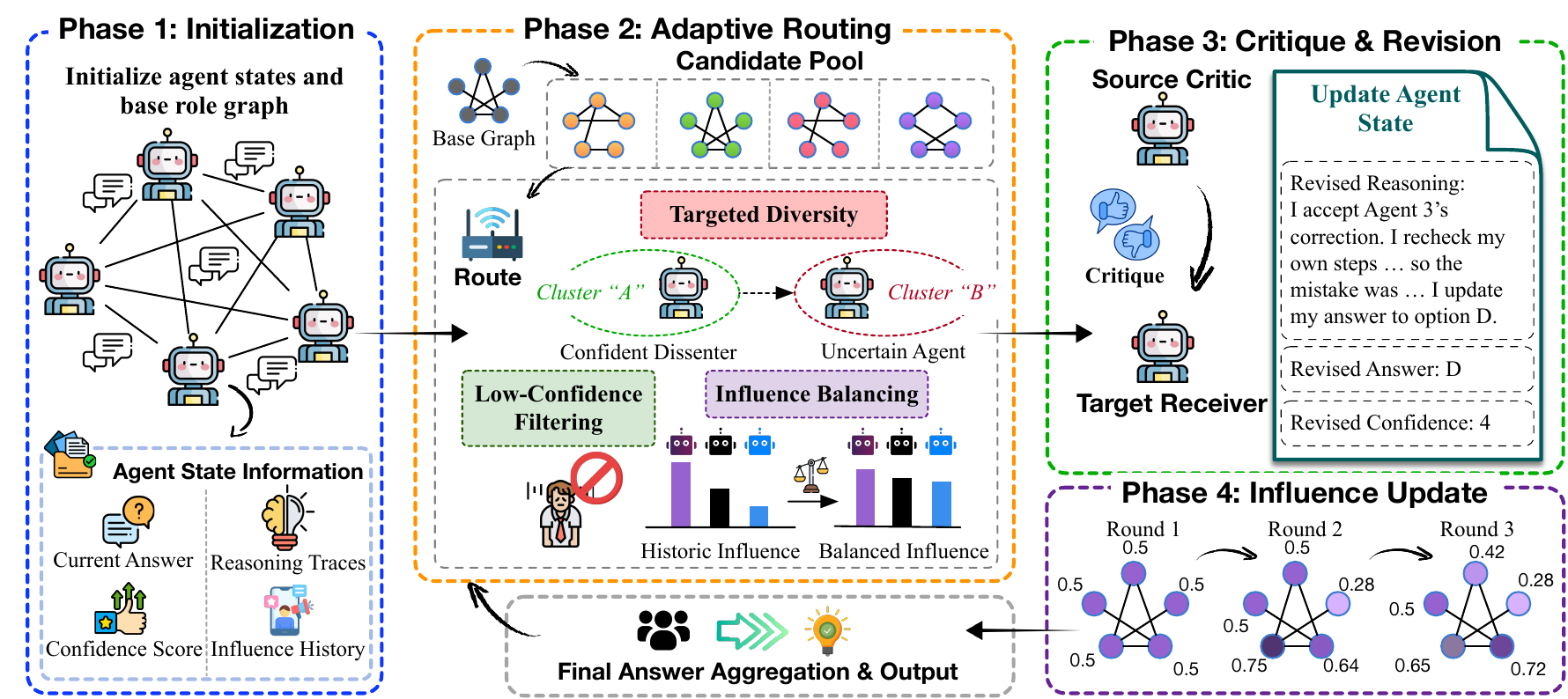}
    \caption{Overview of PEAR. Given a task instance, multiple agents produce independent initial responses. At every debate round, the router selects the communication topology that maximizes a composite score over three state-aware components: targeted diversity, influence balancing, and low-confidence filtering. Sources then critique their assigned targets, targets revise their answers, and the router updates each agent's accumulated influence from the adoption pattern. After several rounds, the final answer is obtained by majority vote.}
    \label{fig:pipeline}
\end{figure*}

This routing decision is governed by a composite objective with three components. \textit{Targeted Diversity} prioritizes edges from agents with high confidence and differing answers toward agents with low confidence, ensuring that reliable dissenting viewpoints are routed to the agents most in need of external correction. \textit{Influence Balancing} penalizes sources that have accumulated disproportionate historical influence, preventing any single agent from dominating the information flow regardless of whether its answer is correct. \textit{Low-Confidence Filtering} suppresses the propagation of critiques from agents with low self-reported confidence, reducing the transmission of unreliable signals through the debate graph. Together, these three components operationalize a principled routing policy that actively promotes wrong-to-right corrections while attenuating the structural pathways through which errors cascade. 

We prove that PEAR can be viewed as an agent-permutation equivariant router: relabeling agents relabels the communication graph but leaves the final-answer distribution unchanged. This follows from its search over sparse relabelings of a fixed base graph and routing scores based only on label-invariant features such as disagreement, confidence, and accumulated influence. We further prove that equivariant routing preserves expected accuracy under exchangeable agent populations and does not increase covering complexity, yielding a uniform generalization guarantee. These results suggest that PEAR improves debate by combining adaptivity with the natural symmetries of multi-agent reasoning.

%\paragraph{Experiments.}
We evaluate PEAR on four benchmarks, MMLU-Pro~\cite{wang2024mmlu:mmlu-pro}, TruthfulQA~\cite{lin2022truthfulqa:truthfulqa}, GSM8K~\cite{DBLP:journals/corr/abs-2110-14168:gsm8k}, and MATH-500~\cite{hendrycks2021measuring:math500}, spanning knowledge-intensive reasoning, factual question answering and competition-level mathematics. Experiments are conducted across six instruction-tuned models of varying scale and family, including Gemma-3-12B~\cite{gemmateam2025gemma3technicalreport}, Llama-3.1-8B~\cite{DBLP:journals/corr/abs-2407-21783:llama}, Qwen2.5-14B, and Qwen-3-30B-A3B~\cite{DBLP:journals/corr/abs-2505-09388:qwen3}, together with two closed-source models, GPT-5.4-nano~\cite{openai2026gpt54} and Claude-Haiku-4.5~\cite{anthropic2025haiku45}. PEAR consistently outperforms all fixed-topology debate variants across all settings, achieving gains of up to 9.0\% points over the strongest baseline, with ablation studies confirming the independent contribution of each routing component. 

To the best of our knowledge, this is the first paper in the literature that realizes equivariant communication in multi-agent debate, which significantly improves the performance of LLM reasoning.

%Code is at xxx.

% Our main contributions are as follows:
% \begin{itemize}[noitemsep,topsep=2pt]
%     \item We identify information homogenization and error cascades as two structural failure modes of fixed-topology MAD, and propose a unified information-routing perspective to address them.
%     \item We introduce PEAR, an inference-time framework that dynamically selects communication topologies based on agent confidence, answer diversity, and accumulated influence, requiring no model fine-tuning or ground-truth supervision.
%     \item We provide comprehensive empirical evaluation across four benchmarks and five models, with trajectory-level diagnostics that directly measure the corrective and corrupting dynamics of debate.
% \end{itemize}
% Our main contributions are as follows. We identify information homogenization and error cascades as two structural failure modes of fixed-topology MAD, and propose a unified information-routing perspective to address them. Building on this perspective, we introduce PEAR, an inference-time framework that dynamically selects communication topologies based on agent confidence, answer diversity, and accumulated influence, requiring no model fine-tuning or ground-truth supervision. Finally, we provide a comprehensive empirical evaluation across four benchmarks and five models, with trajectory-level diagnostics that directly measure the corrective and corrupting dynamics of debate.

\section{Related Works}
\label{sec:related-work}

\paragraph{Multi-agent debate (MAD).}
%MAD has emerged as an effective inference-time paradigm for improving the reasoning capability of LLMs. 
Early work \cite{du2024multiagentdebate:mad-2,chen2024reconcile:mad-1,liang2024encouraging:mad-3} shows that multiple agents iteratively critiquing and revising each other's answers can outperform single-agent inference on knowledge-intensive and reasoning tasks. 
Subsequently, \citet{liu2024groupdebate} partition agents into subgroups to reduce token cost, while \citet{lin2025enhancing,taubenfeld2025confidence,fu2025deep} incorporate self-reported confidence to weight or filter agents' contributions. More recently, \citet{zhu2026demystifying} identify viewpoint homogeneity as a primary cause of debate stagnation; \citet{zhang2026key} show that positionally dominant agents disproportionately determine the final outcome; \citet{tian2026multi} reveal that erroneous content accumulated across rounds can persistently mislead downstream agents; and \citet{nguyen2026hear} preserve diversity by selectively retaining historically dissenting messages. 
%These works %diagnose failure modes induced by \emph{who} speaks and \emph{what} is remembered, but 
%largely retain a fixed communication structure across rounds. 
%AR-MAD %treats the communication topology as a first-class control variable and 
%dynamically routes critiques, %based on the instantaneous system state, thereby 
%mitigating the induced information homogenization and error propagation. %by fixed interaction structures.

\paragraph{Communication topology.}
%A parallel line of research investigates how communication structure shapes multi-agent performance. In LLM-based systems, 
~\citet{li2024improving} demonstrate that sparse topologies can match or exceed fully connected debate at substantially lower cost. %, motivating the use of bandwidth-limited graphs. 
Subsequently, %methods learn or design the topology itself: 
~\citet{zhang2024g:g-designer} propose G-Designer, which uses graph neural networks to architect task-specific communication graphs; ~\citet{shen2025assemble} autoregressively generate agent topologies; and ~\citet{li2025adaptive} adaptively prune edges to balance accuracy and efficiency. Beyond LLM debate, ~\citet{hu2024learning} learn communication graphs end-to-end, ~\citet{sun2024t2mac} perform targeted message selection, ~\citet{guan2024efficient} aggregate information via self-supervision, and ~\citet{10720863} optimize communication at the semantic level. 
These approaches typically %optimize a %\emph{single} 
%topology, either offline or once per task, 
rely on training signals or learned controllers. In contrast, PEAR adapts the topology purely at inference.  %using the evolving debate state to route information 
%without any model fine-tuning or ground-truth supervision.
\section{Preliminaries}
\label{sec:preliminaries}

%\paragraph{Task and agents.}
We consider a task instance $x\in\mathcal{X}$ with ground-truth label $y^\star\in\mathcal{Y}$ drawn from an unknown distribution $\mathcal{D}$. A debate is conducted by $n\geq 2$ agents indexed by $[n]=\{1,\ldots,n\}$. Each agent $i$ is a stochastic inference-time policy $\pi_{\theta_i}(\cdot\mid x,o_i)$ that maps the task instance and a local observation $o_i$ to a distribution over outputs.
%\paragraph{Per-agent state and debate state.}
At the end of each round $r$, every agent $i$ exposes a structured tuple
%\begin{equation*}
    \(\xi_i^{(r)}=\big(y_i^{(r)},\,c_i^{(r)},\,r_i^{(r)}\big)\),
%\end{equation*}
where $y_i^{(r)}\in\mathcal{Y}$ is the current answer, $c_i^{(r)}\in\{1,\ldots,5\}$ is the self-reported confidence on an ordinal scale, and $r_i^{(r)}$ is the accompanying chain-of-thought reasoning. In parallel, the router maintains an accumulated \emph{influence} statistic $\rho_i^{(r)}\in[0,1]$ that summarizes how often agent $i$'s prior critiques have been adopted by its downstream targets; its update rule is given in Section~\ref{sec:method}. The \emph{debate state} observable to the router at the start of round $r$ is
%\begin{equation*}
    \(s_r=\big\{\big(\xi_i^{(r-1)},\rho_i^{(r-1)}\big)\big\}_{i\in[n]}\).
%\end{equation*}

\paragraph{Role graph.}
%We distinguish \emph{roles} from \emph{agents}. 
Let
%\begin{equation*}
    $G_0=(V_0,E_0)$, $V_0=[n]$,
%\end{equation*}
be a directed base role graph that fixes a sparse communication pattern. 
An edge $(u\!\to\!v)\in E_0$ permits the agent occupying role $v$ to observe messages produced by the agent occupying role $u$. A role-to-agent assignment is a bijection $\pi\in S_n$, where $\pi(u)$ is the agent occupying role $u$. The induced agent-level communication graph is
\begin{align*}
    G(\pi) &= ([n],\,E(\pi)),\\
    E(\pi) &= \{(\pi(u)\!\to\!\pi(v)):(u\!\to\!v)\in E_0\}.
\end{align*}
By construction, every $G(\pi)$ preserves the per-agent in-degree of $G_0$, so the assignment $\pi$ only relabels which concrete agent occupies each role and yields a candidate sparse communication graph rather than an arbitrary subset of edges.
Since every role has in-degree $k$, the base graph has
\(
    m := |E_0| = nk
\)
directed edges. Every candidate graph $G(\pi)$ therefore satisfies
$|E(\pi)|=m$.

\paragraph{Round-based debate.}
A debate proceeds for $R$ rounds. At the start of round $r$, the round-$r$ communication graph $G^{(r)}=G(\pi_r)$ is selected as a state-aware function of $s_r$; the specific selection rule is given in Section~\ref{sec:method}. After $R$ rounds, the protocol returns $\hat{y}=\mathrm{Agg}\big(\{y_i^{(R)}\}_{i\in[n]}\big)$, where $\mathrm{Agg}$ is the aggregation rule (we use majority vote).
\section{Permutation-Equivariant Adaptive Routing Multi-Agent Debate}
\label{sec:method}
 
%Adaptive Routing Multi-Agent Debate (AR-MAD) is an inference-time protocol that casts multi-agent debate as a state-aware information-routing problem. Given a task instance $x$ and $n$ agents, 
PEAR proceeds in four phases: Initialization, Adaptive Routing, Critique \& Revision, and Influence Update.
%\textbf{Initialization}, in which the base communication graph is fixed and agents respond independently; \textbf{Adaptive Routing}, in which the router selects a sparse communication topology by scoring candidates with three components, \emph{Targeted Diversity}, \emph{Influence Balancing}, and \emph{Low-Confidence Filtering}; \textbf{Critique \& Revision}, in which each source agent produces a targeted critique and the corresponding target revises its answer; and \textbf{Influence Update}, in which the router refreshes per-agent influence statistics from the adoption pattern just observed. 
The latter three are repeated for $R$ rounds. Figure~\ref{fig:pipeline} provides a schematic overview. %of the protocol. %and Algorithm~\ref{alg:armad} summarizes the full procedure.

\subsection{Initialization}
\label{sec:method:initial}
 
\paragraph{Base role graph.}
PEAR instantiates the base role graph $G_0$ from Section~\ref{sec:preliminaries} as a sparse $k$-regular template in which every role has in-degree $k$. The $k$-regular template is preferred over a clique for three reasons: (i) it caps the per-round critique budget at $nk$ rather than $n(n{-}1)$ edges; (ii) it keeps the per-agent input bounded, mitigating the noise introduced by aggregating critiques from every other agent; and (iii) it admits a tractable space of candidate topologies $\{G(\pi):\pi\in S_n\}$ obtained by relabeling roles, so the router can search among meaningfully different communication graphs while preserving each agent's input budget at $k$.

\paragraph{Initial responses.}
Before any communication occurs, each agent $i\in[n]$ independently produces an initial response $\xi_i^{(0)}\sim\pi_{\theta_i}(\cdot\mid x)$, using the structured tuple defined in Section~\ref{sec:preliminaries}. Agents are instructed to reserve the low end of the confidence scale for guesses or for cases in which competing alternatives cannot be ruled out, and to reserve the highest score for fully verified solutions. The router-side influence statistic is initialized to $\rho_i^{(0)}=0$ for every agent.

\subsection{Adaptive Routing}
\label{sec:method:routing}

\paragraph{Candidate pool.}
At the start of round $r$, the router constructs a finite candidate pool
%\begin{equation*}
\(    \mathcal{C}_r=\{\pi_r^{(1)},\ldots,\pi_r^{(M)}\}\subseteq S_n\), \(M=|\mathcal{C}_r|\),
%\end{equation*}
by enumerating assignments that yield distinct edge sets $E(\pi_r^{(m)})$. The size $M$ is upper-bounded by a configurable budget $M_{\max}$. The candidate pool merely delimits the search space over which the state-aware score is computed; it is not the routing decision itself. 
For a candidate topology $G(\pi)$ with edge set $E=E(\pi)$, the router computes a composite score $S(E\mid s_r)$ from three state-aware components, each normalized so that the relative magnitudes of the weights $\alpha_T,\alpha_I,\alpha_L$ remain interpretable.
 
\paragraph{Targeted diversity.}
This component rewards edges whose source is confident, whose target is uncertain, and whose two endpoints currently hold different answers. Concretely, for an edge $(s\!\to\!t)\in E$,
\begin{align}
    T(s,t)=
    & \mathbf{1}\big[y_s^{(r-1)}\!\neq\! y_t^{(r-1)}\big]
    \mathbf{1}\big[c_s^{(r-1)}\!\geq\!\tau_{\mathrm{src}}\big]\nonumber\\
    &
    \mathbf{1}\big[c_t^{(r-1)}\!\leq\!\tau_{\mathrm{tgt}}\big],
%\end{}
\label{eq:targeted-diversity}
\end{align}
where $\tau_{\mathrm{src}}$ and $\tau_{\mathrm{tgt}}$ are confidence cutoffs for the source and target, respectively. The normalized rate over a candidate edge set is
\begin{equation}
    %\widetilde{T}(E)=\frac{1}{|E|}\sum_{(s,t)\in E}T(s,t)\in[0,1].
    \widetilde T(E)
=
\frac{1}{m}
\sum_{(s,t)\in E} T(s,t)
\in[0,1].
    \label{eq:targeted-diversity-rate}
\end{equation}
Equation~\eqref{eq:targeted-diversity} is asymmetric by design: it favors directed dissent from high-confidence agents toward low-confidence agents, in line with the empirical observation that reliable, opinion-divergent sources are the most likely to convert uncertain targets from wrong to right.

\paragraph{Influence balancing.}
Let $\rho_s^{(r-1)}\in[0,1]$ denote the accumulated influence of agent $s$ entering round $r$, and let $d^{\mathrm{out}}_s(E)=|\{t:(s,t)\in E\}|$ be its out-degree under the candidate edge set. The influence-balancing component penalizes routing structures that allocate additional out-degree to already dominant agents:
\begin{equation}
    %\widetilde{I}(E)=\frac{1}{|E|}\sum_{s\in[n]}\rho_s^{(r-1)}\,d^{\mathrm{out}}_s(E)\in[0,1].
        \widetilde{I}(E)=\frac{1}{m}\sum_{s\in[n]}\rho_s^{(r-1)}\,d^{\mathrm{out}}_s(E)\in[0,1].
    \label{eq:influence-balancing}
\end{equation}
Here $\rho_s^{(r-1)}\in[0,1]$ by induction from the EMA update in
Eq.~(8), and $\sum_s d_s^{\rm out}(E)=m$, so
$\sum_s \rho_s^{(r-1)}d_s^{\rm out}(E)\le m$.
This term provides a structural defense against error cascades: an agent that has previously persuaded many targets is prevented from being amplified further, regardless of whether its current answer is correct.

\algrenewcommand\algorithmicindent{1em}

\begin{algorithm}[t]
\caption{PEAR}%: Adaptive Routing Multi-Agent Debate}
\label{alg:armad}
\begin{algorithmic}[1]
\small
%\Require Input $x$; agents $\{\pi_{\theta_i}\}_{i=1}^{n}$; base graph $G_0$; rounds $R$; weights $(\alpha_T,\alpha_I,\alpha_L)$; thresholds $(\tau_{\mathrm{src}},\tau_{\mathrm{tgt}},\tau_{\mathrm{low}})$; smoothing $\beta$
\Require Input $x$; agents $\{\pi_{\theta_i}\}_{i=1}^n$; base graph
$G_0$; rounds $R$; weights $(\alpha_T,\alpha_I,\alpha_L)$; thresholds
$(\tau_{\rm src},\tau_{\rm tgt},\tau_{\rm low})$; smoothing $\beta$;
temperature $\tau$
\State For all $i\in[n]$: $\xi_i^{(0)}\!\sim\!\pi_{\theta_i}(\cdot\mid x)$ and $\rho_i^{(0)}\!\leftarrow\!0$ 
% \hfill\Comment{Initialization}
\For{$r=1,\ldots,R$}
\State $\pi_r \sim Q_r(\cdot\mid s_r)$; $E^{(r)}\leftarrow E(\pi_r)$
\State \Comment{softmax routing; reduces to argmax as $\tau\to0$}
    %\State $\pi_r\leftarrow\operatorname*{arg\,max}_{\pi\in\mathcal{C}_r}S(E(\pi)\!\mid\!s_r)$;\;\,$E^{(r)}\!\leftarrow\!E(\pi_r)$
    %\State \Comment{Adaptive Routing}
    % \For{$(s,t)\!\in\! E^{(r)}$} 
    % \State For $(s,t)\!\in\! E^{(r)}$: $m_{s\to t}^{(r)}\!\sim\!\pi_{\theta_s}(\cdot\mid x,\xi_t^{(r-1)},\xi_s^{(r-1)})$
    \State For $(s,t)\!\in\! E^{(r)}$: $m_{s\to t}^{(r)} \sim \pi_{\theta_s}\big(\cdot \mid x,\xi_s^{(r-1)},\xi_t^{(r-1)}\big)$
    \State \hfill\Comment{Critique}
    % \EndFor
    \State $\xi_t^{(r)}\!\sim\!\pi_{\theta_t}(\cdot\mid x,\xi_t^{(r-1)},\mathcal{I}_t^{(r)})$ for $t\in[n]$ 
    \hfill\Comment{Revision}
    \State $\rho_s^{(r)}\!\leftarrow\!\beta\rho_s^{(r-1)}\!+\!(1\!-\!\beta)a_s^{(r)}$ for $s\in[n]$ \hfill\Comment{Influence}
\EndFor
\State \Return $\hat{y}=\mathrm{Agg}\big(\{y_i^{(R)}\}_{i\in[n]}\big)$
\end{algorithmic}
\end{algorithm}
 
\paragraph{Low-confidence filtering.}
This component suppresses critique edges whose source is unreliable. For each source $s$, define a confidence penalty
\begin{equation}
    L(s)
=
\max\left\{0,\tau_{\rm low}+1-c_s^{(r-1)}\right\},
    \label{eq:lcf}
\end{equation}
which equals zero whenever the source confidence exceeds the threshold $\tau_{\mathrm{low}}$ and grows linearly as confidence drops below it. We take $\tau_{\rm low}$ to be an integer threshold on the ordinal confidence
scale. Since $c_s^{(r-1)}\in\{1,\ldots,5\}$, the penalty is zero exactly when
$c_s^{(r-1)}\ge \tau_{\rm low}+1$, equivalently when
$c_s^{(r-1)}>\tau_{\rm low}$.
The normalized penalty rate is
\begin{equation}
    %\widetilde{L}(E)=\frac{1}{|E|\cdot\tau_{\mathrm{low}}}\sum_{(s,t)\in E}L(s)\in[0,1].
        \widetilde L(E)
=
\frac{1}{m\tau_{\rm low}}
\sum_{(s,t)\in E} L(s)
\in[0,1].
    \label{eq:lcf-rate}
\end{equation}
The denominator $\tau_{\rm low}$ is chosen because the maximum possible
penalty is attained at minimum confidence $c_s=1$, where
$L(s)=\tau_{\rm low}$.
Unlike a confidence-maximizing rule, Equation~\eqref{eq:lcf} only penalizes \emph{low}-confidence sources and never rewards the highest-confidence one, consistent with the observation that confidence is informative as a filter but unreliable as a sole selector.
 
\paragraph{Composite score and topology selection.}
The state-conditioned routing score for candidate $\pi$ is the linear combination
\begin{align}
S\big(E(\pi)\,\big|\,s_r\big)
&=
\alpha_T\,\widetilde{T}\big(E(\pi)\big) \\
&\quad
-\alpha_I\,\widetilde{I}\big(E(\pi)\big)
-\alpha_L\,\widetilde{L}\big(E(\pi)\big),\nonumber
% \label{eq:composite-score}
\end{align}
with non-negative weights $(\alpha_T,\alpha_I,\alpha_L)$ that are held fixed across rounds. Given the composite scores $\{S(E(\pi)\mid s_r):\pi\in\mathcal{C}_r\}$, the router selects the round-$r$ assignment via the state-conditional distribution
\begin{align}
    Q_r(\pi\mid s_r)
    &=
    \frac{\exp\!\big(S(E(\pi)\mid s_r)/\tau\big)}
         {\sum_{\pi'\in\mathcal{C}_r}\exp\!\big(S(E(\pi')\mid s_r)/\tau\big)},\notag\\
    \pi_r &\sim Q_r(\cdot\mid s_r),
\label{eq:softmax-selection}
\end{align}
with temperature $\tau$. Equation~\eqref{eq:softmax-selection} concentrates mass on the highest-scoring candidate while preserving a controllable degree of exploration over near-optimal candidates, and reduces to argmax selection as $\tau\!\to\!0$. The resulting round-$r$ topology is $G^{(r)}=G(\pi_r)$.

\subsection{Critique \& Revision}
\label{sec:method:critique}
 
\paragraph{Critique generation.}
For each directed edge $(s\!\to\!t)\in E^{(r)}$, the source agent reads the target's previous answer $y_t^{(r-1)}$, reasoning $r_t^{(r-1)}$, and confidence $c_t^{(r-1)}$, and emits a targeted critique conditioned on its own current state:
\begin{equation*}
    m_{s\to t}^{(r)}\sim\pi_{\theta_s}\big(\cdot\mid x,\,\xi_t^{(r-1)},\,\xi_s^{(r-1)}\big).
\end{equation*}
The critique format is unconstrained beyond requiring a verdict and a justification.
 
\paragraph{Answer update.}
Each target $t$ then revises its output given the incoming critique set $\mathcal{I}_t^{(r)}=\{m_{s\to t}^{(r)}:(s\!\to\!t)\in E^{(r)}\}$:
\begin{align*}
    \xi_t^{(r)}=&\big(y_t^{(r)},\,c_t^{(r)},\,r_t^{(r)}\big)\\
    \sim&\pi_{\theta_t}\big(\cdot\mid x,\,\xi_t^{(r-1)},\,\mathcal{I}_t^{(r)}\big).
\end{align*}
In the same response, the target emits an \texttt{ACCEPT}/\texttt{REJECT} decision for each incoming critique; these decisions are recorded by the router and consumed in the next phase.
 
\subsection{Influence Update}
\label{sec:method:influence}
 
After all agents have updated, the router computes the round-$r$ adoption rate of each source $s$,
\begin{equation*}
    a_s^{(r)}
    =
    \frac{\big|\{t:(s,t)\in E^{(r)},\;m_{s\to t}^{(r)}\text{ accepted}\}\big|}
         {\max\big(1,\,d^{\mathrm{out}}_s(E^{(r)})\big)},
\end{equation*}
and refreshes the influence statistic with an exponential moving average:
\begin{equation}
    \rho_s^{(r)}=\beta\,\rho_s^{(r-1)}+(1-\beta)\,a_s^{(r)},
    \label{eq:influence-update}
\end{equation}
with smoothing coefficient $\beta\in[0,1)$. The updated influence enters the routing score at round $r{+}1$ through Equation~\eqref{eq:influence-balancing}, closing the loop between past persuasion and future structural privilege.

\section{Theoretical Results}
\label{sec:theory}

%This section presents theoretical guarantees. Detailed proofs are given in Appendix \ref{app:theory-proofs}.

%We give a theoretical account of AR-MAD as an agent-permutation equivariant routing rule. The guiding principle is that agent labels are arbitrary: if the agents are relabeled, the communication pattern and intermediate states should be relabeled in the same way, while the distribution of the final answer remains unchanged. This parallels permutation-equivariant mechanism design, where symmetrization preserves the objective while reducing unnecessary dependence on arbitrary labels~\citep{qin2022benefits}.

\paragraph{Formalization.}

Let $\mathfrak S_n$ be the symmetric group over agents. For
$\sigma\in\mathfrak S_n$, write $\sigma s$ for the debate state obtained by
renaming each agent $i$ as $\sigma(i)$, and write
\(
    \sigma E = \{(\sigma u,\sigma v):(u,v)\in E\}
\)
for the corresponding relabeling of a directed edge set. We assume the candidate
family is closed under relabeling:
\(
    E\in\mathcal C(s)
    \Longleftrightarrow
    \sigma E\in\mathcal C(\sigma s)
\).
This holds for the full role-assignment family induced by a fixed base graph,
since each candidate is obtained by assigning concrete agents to the same set of
roles. Let $m=|E_0|=nk$ denote the number of critique edges in every candidate
graph.

\paragraph{Equivariance.}
PEAR scores each candidate edge set by
\(
    \alpha_T\widetilde T(E\mid s)
    -
    \alpha_I\widetilde I(E\mid s)
    -
    \alpha_L\widetilde L(E\mid s)\),
where $\widetilde T$ rewards confident disagreement toward uncertain targets,
$\widetilde I$ penalizes allocating out-degree to previously influential agents,
and $\widetilde L$ penalizes low-confidence sources. We thus have the following theorem that proves PEAR is an agent-equivariant router. The proof is in Appendix \ref{app:armad-equivariant}. %The softmax router is
%\[
%    Q_\alpha(E\mid s)
%    =
%    \frac{\exp(S_\alpha(E\mid s)/\tau)}
%    {\sum_{E'\in\mathcal C(s)}\exp(S_\alpha(E'\mid s)/\tau)} .
%\]

\begin{theorem}[Agent-permutation equivariance]
\label{thm:armad-equivariant}
Assume the candidate family is closed under agent relabeling. Then, for x
$\sigma\in\mathfrak S_n$,
%\begin{gather*}    
    $S_\alpha(\sigma E\mid \sigma s)=S_\alpha(E\mid s)$, %\\
    $Q_\alpha(\sigma E\mid \sigma s)=Q_\alpha(E\mid s)$.
%\end{gather*}
\end{theorem}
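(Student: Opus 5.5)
The plan is to first make precise what the relabeled state $\sigma s$ is, then show that each of the three score components is separately invariant, and finally pass the invariance through the softmax. We define $\sigma s$ by moving each agent's data to its new label: in $\sigma s$, agent $\sigma(i)$ carries the tuple $(\xi_i,\rho_i)$ that agent $i$ carried in $s$. Equivalently, $y^{\sigma s}_{\sigma(i)}=y^s_i$, $c^{\sigma s}_{\sigma(i)}=c^s_i$, and $\rho^{\sigma s}_{\sigma(i)}=\rho^s_i$. The only structural facts we need are two. The map $(u,v)\mapsto(\sigma u,\sigma v)$ is a bijection from $E$ onto $\sigma E$, so $|\sigma E|=|E|=m$. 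Out-degrees transport as $d^{\mathrm{out}}_{\sigma s'}(\sigma E)=d^{\mathrm{out}}_{s'}(E)$, which holds because $t\mapsto\sigma t$ bijects the out-neighbours of $s'$ in $E$ with those of $\sigma s'$ in $\sigma E$.

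The component-wise checks go as follows.
\begin{itemize}
\item \textbf{Targeted diversity.} The edge score $T$ depends only on the answers and confidences at its two endpoints, so $T_{\sigma s}(\sigma u,\sigma v)=T_s(u,v)$. Reindexing the sum over $\sigma E$ by the edge bijection and keeping the same normalizer $m$ gives $\widetilde T(\sigma E\mid\sigma s)=\widetilde T(E\mid s)$.
\item \textbf{Low-confidence filtering.} The penalty $L(\cdot)$ depends only on the source confidence and the label-free constant $\tau_{\rm low}$. The same reindexing argument gives $\widetilde L(\sigma E\mid\sigma s)=\widetilde L(E\mid s)$.
\item \textbf{Influence balancing.} We substitute $s'=\sigma j$ in $\sum_{s'}\rho^{\sigma s}_{s'}d^{\mathrm{out}}_{s'}(\sigma E)$. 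Combining the relabeling of $\rho$ with the degree identity turns it into $\sum_j\rho^s_j d^{\mathrm{out}}_j(E)$.
\end{itemize}
Since the weights $\alpha_T,\alpha_I,\alpha_L$ and the normalizer $m$ are label-free, linearity then yields $S_\alpha(\sigma E\mid\sigma s)=S_\alpha(E\mid s)$.

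For the router, we write $Q_\alpha(\sigma E\mid\sigma s)=\exp(S_\alpha(\sigma E\mid\sigma s)/\tau)/Z(\sigma s)$, where $Z(\sigma s)=\sum_{E'\in\mathcal C(\sigma s)}\exp(S_\alpha(E'\mid\sigma s)/\tau)$. The numerator equals that of $Q_\alpha(E\mid s)$ by the first part. For the partition function we use the closure assumption $E\in\mathcal C(s)\Leftrightarrow\sigma E\in\mathcal C(\sigma s)$. This assumption says exactly that $E\mapsto\sigma E$ is a bijection $\mathcal C(s)\to\mathcal C(\sigma s)$, with inverse induced by $\sigma^{-1}$. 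Reindexing $Z(\sigma s)$ by $E'=\sigma E''$ and applying score invariance termwise then gives $Z(\sigma s)=Z(s)$, and the identity follows. The argmax limit $\tau\to0$ is covered as well, since the set of maximizers is carried bijectively by $\sigma$.

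The main obstacle is not computational but definitional, and it lies in the candidate pool. In the algorithm, $\mathcal C_r$ is a budget-truncated enumeration with $M\le M_{\max}$, and duplicate edge sets are removed. A truncation that depends on labels, such as taking the first $M_{\max}$ permutations in lexicographic order, would break the closure hypothesis. We will therefore state explicitly that the theorem applies to the full family $\{E(\pi):\pi\in S_n\}$, which is closed because $\sigma E(\pi)=E(\sigma\circ\pi)$, or to any label-symmetric subfamily. We will also note that $Q$ is viewed as a distribution over distinct edge sets rather than over permutations. This avoids multiplicity issues when several $\pi$ induce the same $E(\pi)$, and the closure assumption is designed to make exactly this step go through.
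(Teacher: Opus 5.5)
Your proposal is correct and follows essentially the same route as the paper's proof. Both arguments establish per-component invariance of $\widetilde T$, $\widetilde I$, $\widetilde L$ through the edge bijection $E\to\sigma E$ and the out-degree identity, and then reindex the softmax partition function through the closure bijection $\mathcal C(s)\to\mathcal C(\sigma s)$. Your extra remarks on label-dependent truncation of the candidate pool and on treating $Q$ as a distribution over distinct edge sets are sensible clarifications that the paper leaves implicit. The paper, for its part, states explicitly that the $\tau\to0$ argmax limit requires ties to be broken equivariantly, for example uniformly among maximizers.
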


%The theorem proves that AR-MAD is an agent-equivariant router. %In the deterministic limit $\tau\to0$, the same statement holds for argmax routing whenever ties are broken equivariantly, for example uniformly among maximizers.

%formalizes the role of the base graph: it fixes a sparse communication budget, while the role-to-agent assignment searches over permutations of this budget without privileging arbitrary agent labels.

\paragraph{Symmetrized routing.}
For comparison, consider any possibly non-equivariant routing policy over full
routing transcripts. Let
\(
    \Gamma=(E^{(1)},\ldots,E^{(R)})
\)
be a sequence of selected edge sets, and let $q(\Gamma\mid s^{(0)})$ be the
distribution over routing transcripts from initial state $s^{(0)}$. Define
orbit-averaged router
\(
    (\mathcal P q)(\Gamma\mid s^{(0)})
    =
    \frac{1}{n!}
    \sum_{\sigma\in\mathfrak S_n}
    q(\sigma\Gamma\mid \sigma s^{(0)})\),
where
$\sigma\Gamma=(\sigma E^{(1)},\ldots,\sigma E^{(R)})$.
Let $A(\Gamma,s^{(0)})\in[0,1]$ denote the final-answer accuracy induced by
routing $\Gamma$ from initial state $s^{(0)}$.

\begin{theorem}[Accuracy preservation]
\label{thm:accuracy-preservation}
Suppose the distribution of initial debate states is exchangeable, the agent
update kernels are permutation-equivariant, and the final aggregation rule is
permutation-invariant. Then
\[
    \mathbb E_{s^{(0)},\Gamma\sim \mathcal Pq}
    \left[A(\Gamma,s^{(0)})\right]
    =
    \mathbb E_{s^{(0)},\Gamma\sim q}
    \left[A(\Gamma,s^{(0)})\right].
\]
Moreover, $\mathcal Pq$ is agent-equivariant:
\[
    (\mathcal Pq)(\sigma\Gamma\mid \sigma s^{(0)})
    =
    (\mathcal Pq)(\Gamma\mid s^{(0)})\text{, }
    \forall \sigma\in\mathfrak S_n .
\]
\end{theorem}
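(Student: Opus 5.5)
The plan has two parts. First comes the equivariance of $\mathcal Pq$, which is a pure group-averaging computation. Then comes accuracy preservation, which uses a change of variables over the group together with an invariance property of the accuracy functional $A$.

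\emph{Equivariance.} Fix $\tau\in\mathfrak S_n$. By definition,
\[
(\mathcal Pq)(\tau\Gamma\mid\tau s^{(0)})=\frac{1}{n!}\sum_{\sigma\in\mathfrak S_n} q(\sigma\tau\Gamma\mid\sigma\tau s^{(0)}).
\]
This uses that relabeling acts as a group action on transcripts and states, i.e.\ $\sigma(\tau\Gamma)=(\sigma\tau)\Gamma$, which holds edgewise for $\sigma E$. The map $\sigma\mapsto\sigma\tau$ is a bijection of $\mathfrak S_n$, so reindexing gives $(\mathcal Pq)(\Gamma\mid s^{(0)})$. I will also note that $\mathcal Pq(\cdot\mid s^{(0)})$ is a probability distribution. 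Each summand $q(\sigma\,\cdot\mid\sigma s^{(0)})$ is a pushforward of a distribution under the bijection $\Gamma\mapsto\sigma^{-1}\Gamma$, and this bijection maps the candidate family at $\sigma s^{(0)}$ to that at $s^{(0)}$ by the closure assumption.

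\emph{Key invariance lemma.} I will next show that $A(\sigma\Gamma,\sigma s^{(0)})=A(\Gamma,s^{(0)})$ for all $\sigma$.
\begin{enumerate}
\item Since the agent update kernels are permutation-equivariant, the conditional law of the round-$r$ state given state $\sigma s^{(r-1)}$ and graph $\sigma E^{(r)}$ is the $\sigma$-relabeled law of the state given $(s^{(r-1)},E^{(r)})$.
\item Inducting over $r=1,\dots,R$ (Chapman--Kolmogorov), the joint law of final answers under $(\sigma\Gamma,\sigma s^{(0)})$ is the $\sigma$-relabeling of that under $(\Gamma,s^{(0)})$. The influence statistics $\rho$ are carried along as part of the state, and their EMA update is coordinatewise, hence equivariant.
\item $\mathrm{Agg}$ is permutation-invariant and the ground truth $y^\star$ does not depend on labels, so the probability of a correct output is unchanged.
\end{enumerate}
This step is the main obstacle, because it requires making precise that "accuracy induced by $\Gamma$" is an expectation over the agent stochasticity. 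I would first state the kernel-equivariance hypothesis as an identity of Markov kernels and then run the induction.

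\emph{Accuracy preservation.} Write $\mu$ for the law of $s^{(0)}$; exchangeability means $\mu(\sigma s)=\mu(s)$, jointly with $x,y^\star$. Then
\[
\mathbb E_{\mathcal Pq}[A]=\frac{1}{n!}\sum_\sigma\sum_{s,\Gamma}\mu(s)\,q(\sigma\Gamma\mid\sigma s)\,A(\Gamma,s).
\]
For each fixed $\sigma$, substitute $s'=\sigma s$ and $\Gamma'=\sigma\Gamma$; these are bijections on states and transcripts. Using $\mu(\sigma^{-1}s')=\mu(s')$ and $A(\sigma^{-1}\Gamma',\sigma^{-1}s')=A(\Gamma',s')$ from the lemma, each inner sum equals $\sum_{s',\Gamma'}\mu(s')q(\Gamma'\mid s')A(\Gamma',s')=\mathbb E_q[A]$. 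Averaging over $\sigma$ gives the claim. For non-discrete state spaces the same argument holds with integrals and measure-preserving pushforwards, so I would present the discrete case and remark on the general one.
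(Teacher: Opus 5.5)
Your proposal is correct and follows essentially the same route as the paper: reindex the group sum ($\sigma\mapsto\sigma\tau$) to get equivariance of $\mathcal Pq$, then change variables $s'=\sigma s^{(0)}$, $\Gamma'=\sigma\Gamma$ in each orbit term, using exchangeability together with $A(\sigma\Gamma,\sigma s^{(0)})=A(\Gamma,s^{(0)})$. The only difference is that you justify this invariance of $A$ by an explicit round-by-round induction on equivariant Markov kernels, whereas the paper asserts it in one line from kernel equivariance and aggregation invariance.
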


Thus, under exchangeable agent populations, enforcing equivariance does not
change expected final accuracy; it only removes arbitrary dependence on the agent
names. The proof is in Appendix \ref{app:accuracy-preservation}.

\paragraph{Generalization.}
We next formalize the complexity benefit of equivariant routers. Let
$\mathcal Q$ be a class of one-round routers $q(\cdot\mid s)$, and define
\(
    d(q,q')
    =
    \sup_s
    \left\|
    q(\cdot\mid s)-q'(\cdot\mid s)
    \right\|_1 
\).
Let
\(
    \mathcal P\mathcal Q=\{\mathcal Pq:q\in\mathcal Q\}
\)
be the orbit-projected class. We use covering number to measure the hypothesis complexity; see Definition \ref{def:covering-number} in Appendix \ref{app:complexity}.

\begin{lemma}[Covering number control]
\label{lem:covering-projection}
For every $\epsilon>0$, the covering number satisfies,
\(
    \mathcal N(\mathcal P\mathcal Q,\epsilon,d)
    \le
    \mathcal N(\mathcal Q,\epsilon,d)
\).
\end{lemma}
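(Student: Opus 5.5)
The plan is to show that the orbit-averaging operator $\mathcal P$ is non-expansive, i.e.\ $1$-Lipschitz, with respect to $d$, and then push an $\epsilon$-cover of $\mathcal Q$ through it. Here $\mathcal P$ acts on one-round routers by
\[
(\mathcal Pq)(E\mid s)=\frac{1}{n!}\sum_{\sigma\in\mathfrak S_n} q(\sigma E\mid \sigma s).
\]
Once non-expansiveness is established, take a minimal $\epsilon$-cover $\{q_1,\ldots,q_N\}$ of $\mathcal Q$, with $N=\mathcal N(\mathcal Q,\epsilon,d)$. For any $\mathcal Pq\in\mathcal P\mathcal Q$, choose $q_j$ with $d(q,q_j)\le\epsilon$. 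Then
\[
d(\mathcal Pq,\mathcal Pq_j)\le d(q,q_j)\le\epsilon,
\]
so $\{\mathcal Pq_1,\ldots,\mathcal Pq_N\}$ is an $\epsilon$-cover of $\mathcal P\mathcal Q$ with at most $N$ elements. If Definition~\ref{def:covering-number} requires internal covers, meaning centers taken from the class itself, this choice is still valid because $\mathcal Pq_j\in\mathcal P\mathcal Q$. If the definition allows external covers, the argument is unchanged.

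The key step is the contraction estimate. Fix a state $s$. By the triangle inequality,
\[
\|(\mathcal Pq)(\cdot\mid s)-(\mathcal Pq')(\cdot\mid s)\|_1
\le \frac{1}{n!}\sum_{\sigma}\sum_{E}\big|q(\sigma E\mid\sigma s)-q'(\sigma E\mid\sigma s)\big|.
\]
For each fixed $\sigma$, the map $E\mapsto\sigma E$ is a bijection from $\mathcal C(s)$ onto $\mathcal C(\sigma s)$, by the closure assumption already used in Theorem~\ref{thm:armad-equivariant}. So the inner sum equals
\[
\|q(\cdot\mid\sigma s)-q'(\cdot\mid\sigma s)\|_1\le\sup_{s'}\|q(\cdot\mid s')-q'(\cdot\mid s')\|_1=d(q,q').
\]
Averaging over $\sigma$ and then taking the supremum over $s$ gives $d(\mathcal Pq,\mathcal Pq')\le d(q,q')$.

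The only point needing care is the reindexing step. It requires the sum over edge sets to run over the support of $q(\cdot\mid\sigma s)$, i.e.\ over $\mathcal C(\sigma s)$, and requires $\sigma$ to act bijectively between candidate families. If routers are instead viewed as distributions on all $m$-edge directed graphs on $[n]$, relabeling is trivially a bijection of that set and no closure assumption is needed. A second point is that $\sigma s$ must range within the state space over which the supremum in $d$ is taken. This holds because the state space consists of per-agent tuples indexed by $[n]$ and is closed under relabeling. I expect no further obstacle: the lemma is a direct consequence of $\mathcal P$ being an average of isometries.
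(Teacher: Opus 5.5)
Your proposal is correct and matches the paper's proof: you show $\mathcal P$ is non-expansive under $d$ (triangle inequality, then bounding each orbit term by $d(q,q')$), and then map a minimal $\epsilon$-cover $\{q_j\}$ of $\mathcal Q$ to the cover $\{\mathcal P q_j\}$ of $\mathcal P\mathcal Q$. Your explicit justification of the reindexing $E\mapsto\sigma E$ through the closure assumption, and your check that the cover lies inside $\mathcal P\mathcal Q$, fill in details the paper leaves implicit.
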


\begin{theorem}[Generalization bound]
\label{thm:generalization}
Let $z_1,\ldots,z_{N_{\rm ex}}$ be i.i.d. task instances, and let
$F(q;z)\in[0,1]$ be an evaluation functional, such as final-answer accuracy,
one-round correction, or negative routing regret. Suppose $F$ is
$L$-Lipschitz in $q$ under $d$:
\(
    |F(q;z)-F(q';z)|\le Ld(q,q')\),
\(    \forall q,q',z 
\).
Define
\(
    \mathcal L(q)=\mathbb E_z[F(q;z)]\) and
\(    \widehat{\mathcal L}_{N_{\rm ex}}(q)
    =
    \frac{1}{N_{\rm ex}}
    \sum_{\ell=1}^{N_{\rm ex}}F(q;z_\ell)\).
Then, for any $\epsilon>0$ and $\delta\in(0,1)$, with probability at least
$1-\delta$,
%\[
%\sup_{q\in\mathcal P\mathcal Q}
%\left|
%\mathcal L(q)-\widehat{\mathcal L}_{N_{\rm ex}}(q)
%\right|
%\le
%2L\epsilon
%+
%\sqrt{
%\frac{
%\log\left(2\mathcal N(\mathcal P\mathcal Q,\epsilon,d)/\delta\right)
%}{
%2N_{\rm ex}
%}}
%.
%\]
%Consequently, by Lemma~\ref{lem:covering-projection},
\begin{align*}    
%&\sup_{q\in\mathcal P\mathcal Q}
&\left|
\mathcal L(q)-\widehat{\mathcal L}_{N_{\rm ex}}(q)
\right|\\
\le&
2L\epsilon
+
\sqrt{
\frac{
\log\left(2\mathcal N(\mathcal Q,\epsilon,d)/\delta\right)
}{
2N_{\rm ex}
}}
.
\end{align*}
\end{theorem}

Lemma \ref{lem:covering-projection} and Theorem~\ref{thm:generalization} show that restricting to equivariant routers cannot increase the covering complexity of the routing class. PEAR therefore combines a sparse communication budget with an equivariant search space, reducing both per-round debate cost and the effective complexity of the router. Proofs are in Appendix \ref{app:complexity}.

\paragraph{One-round correction and influence-balancing.} More theoretical results on one-round correction and influence-balancing are given in Appendix \ref{app:deferred-theory}.
\section{Experiments}
\label{sec:experiments}

\begin{table*}[t]
\centering
\fontsize{8}{8.5}\selectfont
\setlength{\tabcolsep}{3.5pt}
\renewcommand{\arraystretch}{1.15}

\caption{
Final-answer accuracy across datasets and models evaluated on subset settings.
Results are averaged over 5 random seeds and reported as mean $\pm$ standard deviation.
Best results in each row are bolded.
%AR-MAD consistently outperforms all fixed and random topology baselines across all datasets and model families, demonstrating the advantage of adaptive routing over static communication structures.
}
\label{tab:main-results}

\begin{tabular}{lcccccccc}
\toprule

% & \multicolumn{2}{c}{\textbf{Single-Agent}} &
% \multicolumn{5}{c}{\textbf{Fixed / Random Topologies}} &
% \textbf{Adaptive} \\
% \cmidrule(lr){2-3}
% \cmidrule(lr){4-8}

% \textbf{Dataset} &
\textbf{Model} &
\textbf{CoT} &
\textbf{CoT-SC} &
\textbf{Clique} &
\textbf{Star} &
\textbf{Chain} &
\textbf{Ring} &
\textbf{Random} &
\textbf{PEAR} \\
\midrule

% ===================== MMLU =====================
\rowcolor{gray!10}
\multicolumn{9}{c}{\textbf{MMLU-Pro}} \\
\midrule

 Gemma-3-12B      & 0.375$_{\pm .018}$ & 0.410$_{\pm .016}$ & 0.615$_{\pm .014}$ & 0.605$_{\pm .015}$ & 0.525$_{\pm .022}$ & 0.575$_{\pm .018}$ & 0.585$_{\pm .017}$ & \textbf{\cellcolor{blue!6}0.645$_{\pm .013}$} \\
 Llama-3.1-8B     & 0.415$_{\pm .021}$ & 0.460$_{\pm .020}$ & 0.595$_{\pm .018}$ & 0.560$_{\pm .021}$ & 0.520$_{\pm .019}$ & 0.495$_{\pm .023}$ & 0.540$_{\pm .020}$ & \textbf{\cellcolor{blue!6}0.625$_{\pm .017}$} \\
 Qwen-2.5-14B     & 0.440$_{\pm .017}$ & 0.440$_{\pm .019}$ & 0.540$_{\pm .015}$ & 0.560$_{\pm .018}$ & 0.620$_{\pm .014}$ & 0.600$_{\pm .015}$ & 0.590$_{\pm .016}$ & \textbf{\cellcolor{blue!6}0.665$_{\pm .012}$} \\
 Qwen-3-30B-A3B   & 0.580$_{\pm .014}$ & 0.645$_{\pm .013}$ & 0.780$_{\pm .011}$ & 0.765$_{\pm .012}$ & 0.785$_{\pm .010}$ & 0.770$_{\pm .011}$ & 0.772$_{\pm .010}$ & \textbf{\cellcolor{blue!6}0.805$_{\pm .009}$} \\
 GPT-5.4-nano     & 0.505$_{\pm .026}$ & 0.550$_{\pm .021}$ & 0.685$_{\pm .018}$ & 0.630$_{\pm .020}$ & 0.650$_{\pm .019}$ & 0.645$_{\pm .018}$ & 0.660$_{\pm .017}$ & \textbf{\cellcolor{blue!6}0.730$_{\pm .015}$} \\
 Claude-Haiku-4.5 & 0.515$_{\pm .023}$ & 0.545$_{\pm .021}$ & 0.710$_{\pm .016}$ & 0.665$_{\pm .017}$ & 0.680$_{\pm .016}$ & 0.635$_{\pm .020}$ & 0.690$_{\pm .015}$ & \textbf{\cellcolor{blue!6}0.775$_{\pm .013}$} \\
\midrule

% ===================== TruthfulQA =====================
\rowcolor{gray!10}
\multicolumn{9}{c}{\textbf{TruthfulQA}} \\
\midrule

 Gemma-3-12B      & 0.705$_{\pm .014}$ & 0.710$_{\pm .013}$ & 0.755$_{\pm .011}$ & 0.760$_{\pm .012}$ & 0.775$_{\pm .010}$ & 0.760$_{\pm .011}$ & 0.770$_{\pm .010}$ & \textbf{\cellcolor{blue!6}0.825$_{\pm .009}$} \\
 Llama-3.1-8B     & 0.645$_{\pm .018}$ & 0.685$_{\pm .017}$ & 0.785$_{\pm .013}$ & 0.745$_{\pm .015}$ & 0.750$_{\pm .014}$ & 0.740$_{\pm .015}$ & 0.765$_{\pm .012}$ & \textbf{\cellcolor{blue!6}0.815$_{\pm .011}$} \\
 Qwen-2.5-14B     & 0.725$_{\pm .013}$ & 0.780$_{\pm .012}$ & 0.875$_{\pm .009}$ & 0.820$_{\pm .011}$ & 0.825$_{\pm .011}$ & 0.855$_{\pm .010}$ & 0.845$_{\pm .009}$ & \textbf{\cellcolor{blue!6}0.890$_{\pm .008}$} \\
 Qwen-3-30B-A3B   & 0.740$_{\pm .012}$ & 0.795$_{\pm .011}$ & 0.845$_{\pm .009}$ & 0.800$_{\pm .010}$ & 0.820$_{\pm .010}$ & 0.780$_{\pm .011}$ & 0.835$_{\pm .009}$ & \textbf{\cellcolor{blue!6}0.905$_{\pm .007}$} \\
 GPT-5.4-nano     & 0.635$_{\pm .021}$ & 0.620$_{\pm .022}$ & 0.845$_{\pm .012}$ & 0.790$_{\pm .015}$ & 0.785$_{\pm .015}$ & 0.830$_{\pm .013}$ & 0.835$_{\pm .012}$ & \textbf{\cellcolor{blue!6}0.915$_{\pm .010}$} \\
 Claude-Haiku-4.5 & 0.615$_{\pm .022}$ & 0.695$_{\pm .018}$ & 0.855$_{\pm .012}$ & 0.775$_{\pm .014}$ & 0.785$_{\pm .013}$ & 0.835$_{\pm .012}$ & 0.845$_{\pm .011}$ & \textbf{\cellcolor{blue!6}0.905$_{\pm .010}$} \\
\midrule

% ===================== GSM8K =====================
\rowcolor{gray!10}
\multicolumn{9}{c}{\textbf{GSM8K}} \\
\midrule

 Gemma-3-12B      & 0.805$_{\pm .010}$ & 0.815$_{\pm .009}$ & 0.900$_{\pm .007}$ & 0.915$_{\pm .007}$ & 0.885$_{\pm .008}$ & 0.890$_{\pm .008}$ & 0.905$_{\pm .006}$ & \textbf{\cellcolor{blue!6}0.955$_{\pm .005}$} \\
 Llama-3.1-8B     & 0.785$_{\pm .012}$ & 0.810$_{\pm .011}$ & 0.835$_{\pm .009}$ & 0.865$_{\pm .008}$ & 0.875$_{\pm .008}$ & 0.855$_{\pm .009}$ & 0.860$_{\pm .007}$ & \textbf{\cellcolor{blue!6}0.895$_{\pm .006}$} \\
 Qwen-2.5-14B     & 0.825$_{\pm .009}$ & 0.820$_{\pm .009}$ & 0.890$_{\pm .007}$ & 0.895$_{\pm .007}$ & 0.930$_{\pm .006}$ & 0.915$_{\pm .007}$ & 0.905$_{\pm .006}$ & \textbf{\cellcolor{blue!6}0.945$_{\pm .005}$} \\
 Qwen-3-30B-A3B   & 0.835$_{\pm .008}$ & 0.845$_{\pm .008}$ & 0.960$_{\pm .004}$ & 0.955$_{\pm .004}$ & 0.945$_{\pm .005}$ & 0.950$_{\pm .005}$ & 0.955$_{\pm .004}$ & \textbf{\cellcolor{blue!6}0.980$_{\pm .003}$} \\
 GPT-5.4-nano     & 0.815$_{\pm .011}$ & 0.835$_{\pm .010}$ & 0.905$_{\pm .007}$ & 0.895$_{\pm .008}$ & 0.885$_{\pm .008}$ & 0.880$_{\pm .008}$ & 0.900$_{\pm .007}$ & \textbf{\cellcolor{blue!6}0.950$_{\pm .005}$} \\
 Claude-Haiku-4.5 & 0.845$_{\pm .009}$ & 0.835$_{\pm .010}$ & 0.910$_{\pm .007}$ & 0.890$_{\pm .008}$ & 0.905$_{\pm .007}$ & 0.895$_{\pm .008}$ & 0.915$_{\pm .006}$ & \textbf{\cellcolor{blue!6}0.955$_{\pm .005}$} \\
\midrule

% ===================== MATH =====================
\rowcolor{gray!10}
\multicolumn{9}{c}{\textbf{MATH-500}} \\
\midrule

 Gemma-3-12B      & 0.185$_{\pm .020}$ & 0.205$_{\pm .019}$ & 0.275$_{\pm .016}$ & 0.285$_{\pm .016}$ & 0.295$_{\pm .015}$ & 0.290$_{\pm .015}$ & 0.300$_{\pm .015}$ & \textbf{\cellcolor{blue!6}0.335$_{\pm .013}$} \\
 Llama-3.1-8B     & 0.135$_{\pm .022}$ & 0.165$_{\pm .020}$ & 0.205$_{\pm .018}$ & 0.195$_{\pm .019}$ & 0.225$_{\pm .017}$ & 0.215$_{\pm .018}$ & 0.220$_{\pm .017}$ & \textbf{\cellcolor{blue!6}0.255$_{\pm .015}$} \\
 Qwen-2.5-14B     & 0.255$_{\pm .018}$ & 0.275$_{\pm .017}$ & 0.385$_{\pm .014}$ & 0.395$_{\pm .014}$ & 0.420$_{\pm .013}$ & 0.410$_{\pm .013}$ & 0.405$_{\pm .014}$ & \textbf{\cellcolor{blue!6}0.485$_{\pm .011}$} \\
 Qwen-3-30B-A3B   & 0.345$_{\pm .015}$ & 0.440$_{\pm .013}$ & 0.540$_{\pm .011}$ & 0.560$_{\pm .011}$ & 0.620$_{\pm .010}$ & 0.610$_{\pm .010}$ & 0.585$_{\pm .012}$ & \textbf{\cellcolor{blue!6}0.665$_{\pm .008}$} \\
 GPT-5.4-nano     & 0.315$_{\pm .017}$ & 0.340$_{\pm .016}$ & 0.455$_{\pm .013}$ & 0.445$_{\pm .013}$ & 0.435$_{\pm .014}$ & 0.440$_{\pm .013}$ & 0.460$_{\pm .013}$ & \textbf{\cellcolor{blue!6}0.525$_{\pm .010}$} \\
 Claude-Haiku-4.5 & 0.285$_{\pm .018}$ & 0.355$_{\pm .016}$ & 0.425$_{\pm .014}$ & 0.410$_{\pm .014}$ & 0.405$_{\pm .014}$ & 0.415$_{\pm .013}$ & 0.420$_{\pm .014}$ & \textbf{\cellcolor{blue!6}0.495$_{\pm .011}$} \\
\bottomrule
\end{tabular}
\end{table*}

\begin{figure*}[t]
    \centering
    \includegraphics[width=0.95\linewidth]{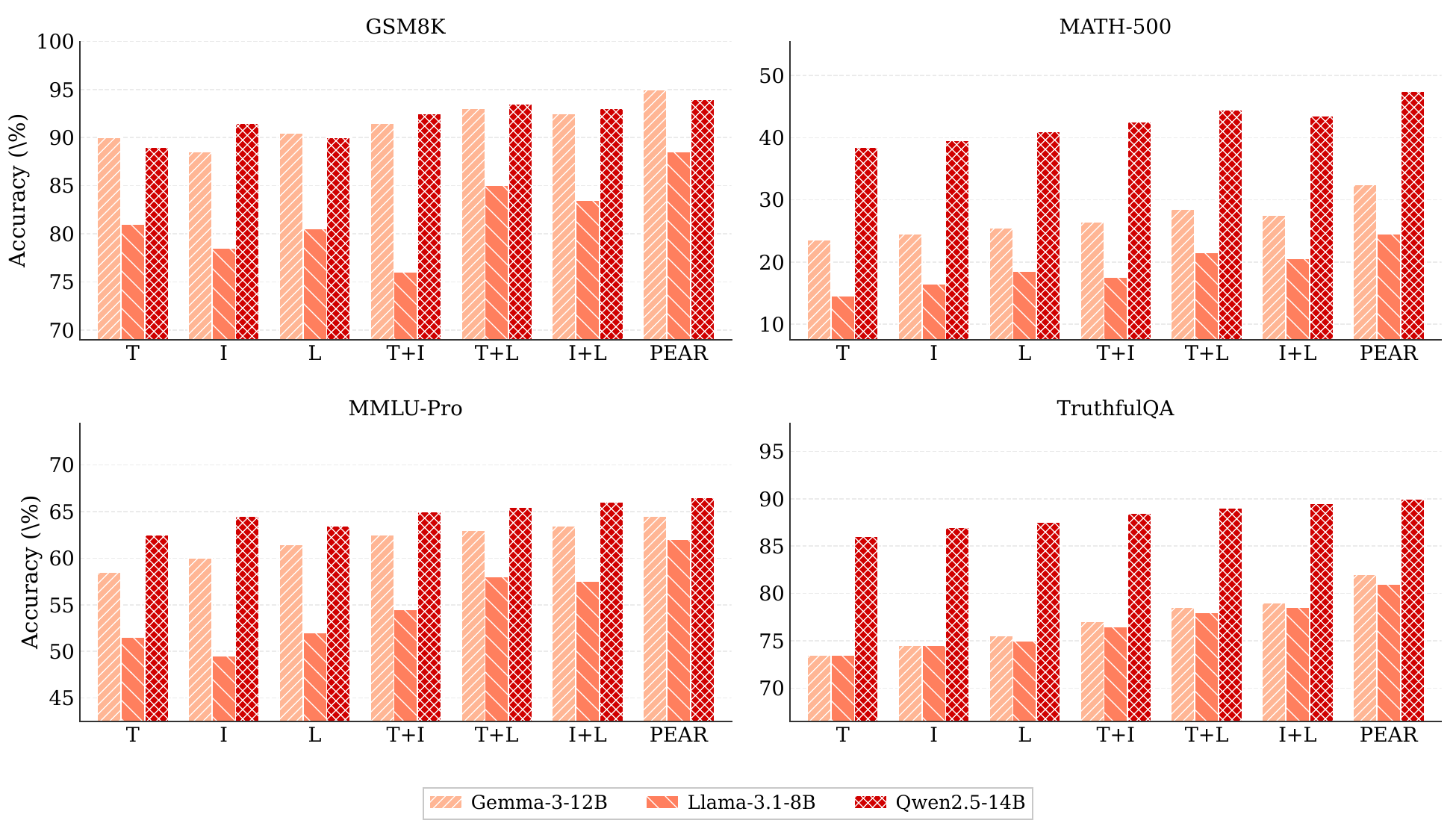}
    \caption{Ablation study of PEAR routing components on open-source models. Targeted = targeted diversity, Influence = influence balancing, LowConf = low-confidence filtering. Each group of bars corresponds to one backbone model; colors indicate routing variant, with PEAR consistently achieving the highest accuracy.}
    \label{fig:ablation}
\end{figure*}

\subsection{Experimental Setup}

\paragraph{Benchmarks.}
We use four benchmarks in knowledge reasoning, factuality, and mathematical reasoning. \textbf{MMLU-Pro}~\cite{wang2024mmlu:mmlu-pro} evaluates broad professional and academic knowledge with multi-step reasoning questions. \textbf{TruthfulQA}~\cite{lin2022truthfulqa:truthfulqa} measures factual reliability and resistance to common misconceptions or false beliefs. \textbf{GSM8K}~\cite{DBLP:journals/corr/abs-2110-14168:gsm8k} focuses on grade-school mathematical word problems that require multi-step arithmetic reasoning. \textbf{MATH-500}~\cite{hendrycks2021measuring:math500} contains competition-level mathematics problems with substantially longer and more complex reasoning chains.

\paragraph{Models.}
We use four open-source instruction-tuned models, namely Gemma-3-12B~\cite{gemmateam2025gemma3technicalreport}, Llama-3.1-8B~\cite{DBLP:journals/corr/abs-2407-21783:llama}, Qwen2.5-14B, and Qwen-3-30B-A3B~\cite{DBLP:journals/corr/abs-2505-09388:qwen3}, together with two closed-source models, GPT-5.4-nano~\cite{openai2026gpt54} and Claude-Haiku-4.5~\cite{anthropic2025haiku45}. All models are used in the main experiments, while ablation studies are conducted on the three smaller open-source models (Gemma-3-12B, Llama-3.1-8B, and Qwen2.5-14B) to reduce computational cost.

% We compare AR-MAD against the following methods: \textbf{CoT}, a single-agent chain-of-thought baseline; \textbf{CoT-SC}, self-consistency over multiple independent single-agent samples; \textbf{Fixed Clique}, a static fully connected debate graph where each agent observes all others; \textbf{Fixed Star}, a static hub-and-spoke debate graph; \textbf{Fixed Chain}, a static sequential communication graph; \textbf{Fixed Ring}, a static cycle graph where agents communicate with fixed local neighbors; \textbf{Random}, a dynamic sparse baseline where each agent receives critiques from $k$ randomly selected peers per round; and \textbf{AR-MAD}, the complete state-aware routing method combining targeted cross-answer exposure, influence balancing, and low-confidence filtering.
\paragraph{Baselines.}
We compare PEAR against the following baselines: \textbf{CoT}~\cite{DBLP:conf/nips/Wei0SBIXCLZ22}, a single-agent chain-of-thought method; \textbf{CoT-SC}~\cite{wang2022self:sc-cot}, self-consistency over multiple independent samples; and a set of fixed-topology multi-agent debate variants including fully connected (\textbf{Clique}), hub-and-spoke (\textbf{Star}), sequential (\textbf{Chain}), and cyclic (\textbf{Ring}) communication graphs. We further include \textbf{Random}, a dynamic sparse baseline where each agent receives critiques from randomly selected peers at each round.

\paragraph{Metrics.}
We report \textit{round-wise} and \textit{final-answer accuracy} as the main results via majority vote, together with five trajectory-level diagnostics: the wrong-to-right and right-to-wrong update rates (\textit{W2R} / \textit{R2W}), the \textit{critique acceptance rate}, the \textit{cross-answer routing rate} (fraction of routed edges whose source and target hold different answers), \textit{source confidence} (mean source-side confidence in $[0,1]$), and \textit{influence entropy} (normalized entropy of the influence distribution). Detailed definitions are given in Appendix~\ref{app:diagnostics}.

\paragraph{Debate configuration.}
PEAR uses $n=5$ agents and $R=5$ debate rounds. The base topology is $k$-regular with $k=2$, so each target agent receives critiques from two source agents per round.
Further implementation details, including hyperparameter selection, are provided in Appendix~\ref{app:debate}.

\paragraph{Reproducibility.} %More implementation details, p
Prompt templates and a case study of multi-round debate dynamics are in Appendices \ref{app:prompts} and \ref{app:case_study}, respectively. 
The code is available at \url{https://github.com/EVIEHub/PEAR}.
%The code is available at \url{https://anonymous.4open.science/r/PEAR-C8AB/README.md}.

\subsection{Experimental Results}

\paragraph{Overall accuracy.}
% Table~\ref{tab:main-results} reports final-answer accuracy across all model--dataset combinations, spanning four reasoning benchmarks, four open-source backbones, and two closed-source backbones. 
% All open-source models are evaluated on 200 examples per dataset, while closed-source models are evaluated on 100 examples due to API cost constraints. Full-scale results on the complete evaluation sets are reported in Appendix~\ref{app:full-results}.

Table~\ref{tab:main-results} reports final-answer accuracy across all model--dataset combinations. Open-source models are evaluated on 200 examples per dataset and closed-source models on 100 due to API cost constraints; full-scale results are reported in Appendix~\ref{app:full-results}. PEAR achieves the highest accuracy in every row, regardless of dataset, model scale, or provenance. Averaged over all settings, PEAR reaches a mean accuracy of $0.701$, compared with $0.620$ for Fixed Clique, $0.602$ for Fixed Star, $0.610$ for Fixed Chain, and $0.609$ for Fixed Ring. Even when each setting is allowed to pick its single most favorable fixed topology, PEAR still gains $5.1$ points on average.

Per-dataset gains over the best fixed topology are $6.0$ points on MMLU-Pro, $3.9$ on TruthfulQA, $4.8$ on GSM8K, and $5.6$ on MATH-500. The improvement is most pronounced on weaker backbones: on Llama-3.1-8B, PEAR gains $8.0$, $9.0$, and $9.5$ points on MMLU-Pro, GSM8K, and MATH-500 respectively, consistent with the failure modes of fixed-topology debate being more severe when individual agents are weaker. Single-agent baselines such as CoT and CoT-SC trail every debate variant by often double-digit margins, and CoT-SC closes only a small portion of the gap, indicating that the gains arise from inter-agent communication rather than from drawing more samples per query.

\paragraph{Ablation study.}
We ablate the three routing components of PEAR: targeted diversity, influence balancing, and low-confidence filtering. Figure~\ref{fig:ablation} reports accuracy for all single-component, two-component, and full variants on 3 open-source models.
The ablation results show that the full routing objective performs best on average. Among partial variants, combining targeted diversity with low-confidence filtering and combining influence balancing with low-confidence filtering are the strongest, reaching average accuracies of $0.603$ and $0.604$, respectively. The full model reaches $0.657$, suggesting that the three components provide complementary benefits. The improvement is particularly large for Llama-3.1-8B, where PEAR Full substantially outperforms all ablations on GSM8K, MATH-500, MMLU-Pro, and TruthfulQA.

\paragraph{Accuracy across debate rounds.}

% To examine how accuracy evolves over the course of a debate, we plot round-wise accuracy for AR-MAD and all baselines on representative open- and closed-source backbones. Figures~\ref{fig:rounds-qwen} and~\ref{fig:rounds-gpt} show the round-by-round trajectories on Qwen-2.5-14B and GPT-5.4-nano, respectively. Across both backbones and all four datasets, AR-MAD consistently achieves higher accuracy at every round and converges to a higher plateau than all fixed-topology baselines. The advantage widens progressively with additional rounds, suggesting that adaptive routing compounds its benefits as the debate proceeds. On Qwen-2.5-14B, AR-MAD improves from $0.284$ at Round~1 to $0.485$ at Round~5 on MATH-500, compared to $0.283$--$0.420$ for the best fixed baseline (Fixed Chain), and from $0.455$ to $0.665$ on MMLU-Pro against $0.453$--$0.620$ for Fixed Chain. On GPT-5.4-nano, the margin is even larger on harder tasks: AR-MAD reaches $0.525$ on MATH-500 by Round~5, surpassing Fixed Clique and Random by $0.070$ points, and achieves $0.730$ on MMLU-Pro against $0.685$ for the next best baseline. On TruthfulQA, where all methods start from a higher base, AR-MAD still maintains a consistent lead, reaching $0.915$ on GPT-5.4-nano compared to $0.845$ for Fixed Clique at Round~5.
%To examine how accuracy evolves over the course of a debate, 
Figures~\ref{fig:rounds-qwen} and~\ref{fig:rounds-gpt} in Appendix~\ref{app:rounds} plot round-wise accuracy on Qwen-2.5-14B and GPT-5.4-nano respectively. Across both backbones and all four datasets, PEAR achieves higher accuracy at every round and converges to a higher plateau than every fixed-topology baseline, with the advantage widening progressively as the debate proceeds. For example, on MATH-500 with Qwen-2.5-14B, PEAR improves from $0.284$ at Round~1 to $0.485$ at Round~5, whereas the strongest fixed baseline saturates at $0.420$. The trend is sharper on harder tasks with GPT-5.4-nano: PEAR reaches $0.525$ on MATH-500 and $0.730$ on MMLU-Pro by Round~5, exceeding the next-best baseline by $0.07$ and $0.045$ points respectively. Even on TruthfulQA, where all methods start from a higher base, PEAR maintains a consistent lead, reaching $0.915$ at Round~5 against $0.845$ for Fixed Clique.

\paragraph{Trajectory-level diagnostics.}
% The diagnostics in Table~\ref{tab:diagnostics} show that AR-MAD improves both correction quality and routing diversity compared to all baselines. It achieves the highest net correction rate of $0.243$, indicating that beneficial updates substantially outweigh harmful flips. More importantly, AR-MAD exhibits substantially higher cross-answer routing at $0.676$, compared to $0.43$--$0.50$ for all baselines, suggesting that the adaptive mechanism effectively encourages information exchange across diverse reasoning trajectories rather than reinforcing local agreement. AR-MAD also yields the highest influence entropy at $0.979$, reflecting a more balanced distribution of influence across agents. Together, these results indicate that performance gains stem from both improved correction dynamics and more diverse communication patterns.
% Detailed metric definitions and additional pattern analyses are provided in Appendix~\ref{app:diagnostics}. 
The diagnostics in Table~\ref{tab:diagnostics} in Appendix~\ref{app:diagnostics} show that PEAR improves both correction quality and routing diversity compared to all baselines. It achieves the highest net correction rate of $0.243$, indicating that beneficial updates substantially outweigh harmful flips, and exhibits substantially higher cross-answer routing at $0.676$ versus $0.43$--$0.50$ for all baselines, suggesting that adaptive routing encourages information exchange across diverse reasoning trajectories rather than reinforcing local agreement. PEAR also yields the highest influence entropy at $0.979$, reflecting a more balanced distribution of influence across agents. Detailed metric definitions and analyses are provided in Appendix~\ref{app:diagnostics}.

\paragraph{Computational overhead.}
% As shown in Figure~\ref{fig:pareto}, AR-MAD lies on the cost--accuracy Pareto frontier on every dataset. We further measure per-example token usage to assess its computational overhead relative to existing debate baselines. AR-MAD consumes about $32.3$k tokens per example, comparable to the Random baseline, which uses the same $k$-regular base graph but selects per-round routing uniformly at random, and $12\%$ lower than the densest Fixed Clique baseline at $36.7$k tokens. Adaptive routing therefore introduces \emph{negligible} additional cost over uniformly random routing under a similar edge budget, while improving average accuracy by $5.7$ points over Random and $6.6$ points over the cheaper Fixed Chain baseline. More details are provided in Appendix~\ref{app:overhead}.
As shown in Figure~\ref{fig:pareto} in Appendix~\ref{app:overhead}, PEAR lies on the cost--accuracy Pareto frontier on every dataset. It consumes about $32.3$k tokens per example, comparable to the Random baseline at the same $k$-regular edge budget and $12\%$ lower than the densest Fixed Clique baseline at $36.7$k tokens. Adaptive routing therefore adds \emph{negligible} cost over uniformly random routing under a similar edge budget, while improving accuracy by $5.7$ points over Random and $6.6$ points over the cheaper Fixed Chain baseline. More details are provided in Appendix~\ref{app:overhead}.

\section{Conclusions}
This paper presents \textit{Permutation-Equivariant Adaptive Routing Multi-Agent Debate (PEAR)}, an inference-time framework that selects a sparse communication topology at each debate round based on agent states. This state-aware routing actively suppresses information homogenization and error cascades, the two common failure modes of fixed-topology debate. Extensive experiments on four benchmarks and six LLMs show that PEAR consistently outperforms fixed-topology and dynamic baselines, with gains driven by improved correction dynamics and more diverse communication.

\section*{Limitations}

\paragraph{Computational overhead.}
Although PEAR lies on the cost--accuracy Pareto frontier among debate methods (see in Appendix~\ref{app:overhead}), its per-example token usage %($\sim\!32$k) 
remains higher than %roughly $40\times$ that of 
single-agent CoT %($\sim\!0.8$k) and $10\times$ that of 
and CoT-SC. %($\sim\!3.1$k). For latency-sensitive or cost-constrained deployments this overhead is the primary practical consideration when adopting multi-agent debate. 
Promising directions for reducing cost while preserving routing quality include adaptive termination, i.e., stopping rounds once a confidence-weighted consensus is reached, and sparser candidate pools; we leave a systematic study of these complementary techniques to future work.

\paragraph{Reliance on self-reported confidence.}
Two of PEAR's three routing components, Targeted Diversity and Low-Confidence Filtering, directly consume the self-reported confidence scores produced by the agents themselves. While LLM confidence is known to be imperfectly calibrated in some settings, our results suggest the router is reasonably robust to this in practice: routing decisions are based on relative confidence comparisons across agents rather than absolute thresholds, which partially mitigates the effect of systematic bias. Nonetheless, coupling PEAR with an external confidence estimator or a calibration-aware routing objective represents a promising direction for further improving routing precision.
 
%\paragraph{Hyperparameter sensitivity.}
%The routing-objective weights $(\alpha_T,\alpha_I,\alpha_L)$, confidence thresholds $\tau_{\mathrm{src}}$, $\tau_{\mathrm{tgt}}$, $\tau_{\mathrm{low}}$, and influence-smoothing coefficient $\beta$ are held fixed across every dataset and backbone in our experiments. We do not perform a large-scale sensitivity analysis and do not provide an automatic tuning procedure. Under substantial task distribution shifts or markedly different model populations, some of these values may need to be re-selected, and how to do so robustly without held-out validation data is left to future work.

\section*{Ethics Considerations}
\label{sec:ethics}

%This work is fundamental research on inference-time communication for multi-agent reasoning with large language models. 
All experiments use publicly available academic benchmarks and involve no human subjects or sensitive data. We do not foresee any direct negative societal impacts arising from the proposed framework.

%\section*{Acknowledgements}
%We thank our collaborators and colleagues for their valuable discussions, insightful suggestions, and constructive feedback, which helped shape the ideas and improve this work. We are also grateful for their helpful perspectives and encouragement throughout the project, as well as for providing the computational resources and technical support needed to conduct our experiments.

%\begin{ack}
%Use unnumbered first level headings for the acknowledgments. All acknowledgments
%go at the end of the paper before the list of references. Moreover, you are required to declare
%funding (financial activities supporting the submitted work) and competing interests (related financial activities outside the submitted work).
%More information about this disclosure can be found at: \url{https://neurips.cc/Conferences/2025/PaperInformation/FundingDisclosure}.

%Do {\bf not} include this section in the anonymized submission, only in the final paper. You can use the \texttt{ack} environment provided in the style file to automatically hide this section in the anonymized submission.
%\end{ack}

%\bibliographystyle{plain}
%\newpage
\bibliography{References}

%%%%%%%%%%%%%%%%%%%%%%%%%%%%%%%%%%%%%%%%%%%%%%%%%%%%%%%%%%%%

\appendix

\newpage
\onecolumn

\section{Proofs}
\label{app:theory-proofs}

\subsection{Proof of Theorem~\ref{thm:armad-equivariant}}
\label{app:armad-equivariant}

\begin{proof}
    
For a debate state $s$, let $y_i(s)$, $c_i(s)$, and $\rho_i(s)$ denote the
answer, confidence, and accumulated influence of agent $i$. Under relabeling by
$\sigma$,
\[
    y_{\sigma i}(\sigma s)=y_i(s),
    \qquad
    c_{\sigma i}(\sigma s)=c_i(s),
    \qquad
    \rho_{\sigma i}(\sigma s)=\rho_i(s).
\]
For any edge $(u,v)\in E$, the targeted-diversity indicator satisfies
\[
\begin{aligned}
    T(\sigma u,\sigma v\mid \sigma s)
    &=
    \mathbf 1\{y_{\sigma u}(\sigma s)\ne y_{\sigma v}(\sigma s)\}
    \mathbf 1\{c_{\sigma u}(\sigma s)\ge \tau_{\rm src}\}
    \mathbf 1\{c_{\sigma v}(\sigma s)\le \tau_{\rm tgt}\} \\
    &=
    \mathbf 1\{y_u(s)\ne y_v(s)\}
    \mathbf 1\{c_u(s)\ge \tau_{\rm src}\}
    \mathbf 1\{c_v(s)\le \tau_{\rm tgt}\} \\
    &=T(u,v\mid s).
\end{aligned}
\]
Since $\sigma:E\to\sigma E$ is a bijection and all candidates have $m$ edges,
\[
    \widetilde T(\sigma E\mid \sigma s)
    =
    \frac{1}{m}
    \sum_{(\sigma u,\sigma v)\in\sigma E}
    T(\sigma u,\sigma v\mid \sigma s)
    =
    \widetilde T(E\mid s).
\]

For the influence term, graph relabeling gives
$d_{\sigma u}^{\rm out}(\sigma E)=d_u^{\rm out}(E)$. Hence
\[
\begin{aligned}
    \widetilde I(\sigma E\mid \sigma s)
    &=
    \frac{1}{m}
    \sum_{i=1}^n
    \rho_i(\sigma s)d_i^{\rm out}(\sigma E) \\
    &=
    \frac{1}{m}
    \sum_{u=1}^n
    \rho_{\sigma u}(\sigma s)d_{\sigma u}^{\rm out}(\sigma E) \\
    &=
    \frac{1}{m}
    \sum_{u=1}^n
    \rho_u(s)d_u^{\rm out}(E)
    =
    \widetilde I(E\mid s).
\end{aligned}
\]
Similarly, because $L(\sigma u\mid \sigma s)=L(u\mid s)$,
\[
    \widetilde L(\sigma E\mid \sigma s)
    =
    \widetilde L(E\mid s).
\]
Combining the three identities yields
\[
    S_\alpha(\sigma E\mid \sigma s)=S_\alpha(E\mid s).
\]

It remains to verify the softmax distribution. By closure of the candidate
family, the map $E'\mapsto \sigma E'$ is a bijection from $\mathcal C(s)$ to
$\mathcal C(\sigma s)$. Therefore
\[
\begin{aligned}
    Q_\alpha(\sigma E\mid \sigma s)
    &=
    \frac{
    \exp(S_\alpha(\sigma E\mid \sigma s)/\tau)
    }{
    \sum_{F\in\mathcal C(\sigma s)}
    \exp(S_\alpha(F\mid \sigma s)/\tau)
    }\\
    &=
    \frac{
    \exp(S_\alpha(E\mid s)/\tau)
    }{
    \sum_{E'\in\mathcal C(s)}
    \exp(S_\alpha(\sigma E'\mid \sigma s)/\tau)
    }\\
    &=
    \frac{
    \exp(S_\alpha(E\mid s)/\tau)
    }{
    \sum_{E'\in\mathcal C(s)}
    \exp(S_\alpha(E'\mid s)/\tau)
    }
    =
    Q_\alpha(E\mid s).
\end{aligned}
\]
For $\tau\to0$, the softmax concentrates on the maximizers of
$S_\alpha(\cdot\mid s)$. If ties are broken uniformly among maximizers, the
same bijection argument applies to the argmax set, giving equivariant
deterministic routing. 
\end{proof}

\subsection{Proof of Theorem~\ref{thm:accuracy-preservation}}
\label{app:accuracy-preservation}

\begin{proof}
    
First, we show that $\mathcal Pq$ is equivariant. For any
$\sigma_0\in\mathfrak S_n$,
\[
\begin{aligned}
    (\mathcal Pq)(\sigma_0\Gamma\mid \sigma_0s^{(0)})
    &=
    \frac{1}{n!}
    \sum_{\sigma\in\mathfrak S_n}
    q(\sigma\sigma_0\Gamma\mid \sigma\sigma_0s^{(0)}).
\end{aligned}
\]
As $\sigma$ ranges over $\mathfrak S_n$, so does $\sigma\sigma_0$. Thus
\[
    (\mathcal Pq)(\sigma_0\Gamma\mid \sigma_0s^{(0)})
    =
    \frac{1}{n!}
    \sum_{\gamma\in\mathfrak S_n}
    q(\gamma\Gamma\mid \gamma s^{(0)})
    =
    (\mathcal Pq)(\Gamma\mid s^{(0)}).
\]

Now consider expected accuracy:
\[
\begin{aligned}
    \mathbb E_{\mathcal Pq}[A]
    &=
    \mathbb E_{s^{(0)}}
    \sum_\Gamma
    (\mathcal Pq)(\Gamma\mid s^{(0)})A(\Gamma,s^{(0)})\\
    &=
    \frac{1}{n!}
    \sum_{\sigma\in\mathfrak S_n}
    \mathbb E_{s^{(0)}}
    \sum_\Gamma
    q(\sigma\Gamma\mid \sigma s^{(0)})A(\Gamma,s^{(0)}).
\end{aligned}
\]
Let $\Gamma'=\sigma\Gamma$ and $s'=\sigma s^{(0)}$. Since the distribution of
$s^{(0)}$ is exchangeable, $s'$ has the same distribution as $s^{(0)}$.
Permutation-equivariance of the update kernels and permutation-invariance of
the aggregation rule imply
\[
    A(\Gamma,s^{(0)})
    =
    A(\sigma\Gamma,\sigma s^{(0)})
    =
    A(\Gamma',s').
\]
Therefore every summand equals
\[
    \mathbb E_{s'}
    \sum_{\Gamma'}
    q(\Gamma'\mid s')A(\Gamma',s'),
\]
which is exactly the expected accuracy under $q$. Averaging over $\sigma$ leaves
the same quantity. 
\end{proof}

\subsection{Proofs of Lemma~\ref{lem:covering-projection} and Theorem~\ref{thm:generalization}}
\label{app:complexity}

\begin{definition}[$\epsilon$-covering number]
\label{def:covering-number}
Let $(\mathcal F,d)$ be a metric space. A set
$\mathcal C_\epsilon\subseteq \mathcal F$ is an $\epsilon$-cover of
$\mathcal F$ if, for every $f\in\mathcal F$, there exists
$g\in\mathcal C_\epsilon$ such that $d(f,g)\le \epsilon$. The covering number is
\[
    \mathcal N(\mathcal F,\epsilon,d)
    =
    \min\left\{
    |\mathcal C_\epsilon|:
    \mathcal C_\epsilon
    \text{ is an } \epsilon\text{-cover of } \mathcal F
    \right\}.
\]
\end{definition}

\begin{proof}[Proof of Lemma~\ref{lem:covering-projection}]

We first show that $\mathcal P$ is non-expansive under
\[
    d(q,q')=\sup_s\|q(\cdot\mid s)-q'(\cdot\mid s)\|_1 .
\]
For any state $s$,
\[
\begin{aligned}
    &
    \|(\mathcal Pq)(\cdot\mid s)-(\mathcal Pq')(\cdot\mid s)\|_1 \\
    &=
    \left\|
    \frac{1}{n!}
    \sum_{\sigma\in\mathfrak S_n}
    \left[
    q(\sigma\cdot\mid \sigma s)
    -
    q'(\sigma\cdot\mid \sigma s)
    \right]
    \right\|_1 \\
    &\le
    \frac{1}{n!}
    \sum_{\sigma\in\mathfrak S_n}
    \left\|
    q(\sigma\cdot\mid \sigma s)
    -
    q'(\sigma\cdot\mid \sigma s)
    \right\|_1 \\
    &\le d(q,q').
\end{aligned}
\]
Taking the supremum over $s$ gives
\[
    d(\mathcal Pq,\mathcal Pq')\le d(q,q').
\]

Let $\{q_1,\ldots,q_M\}$ be an $\epsilon$-cover of $\mathcal Q$. For any
$\mathcal Pq\in\mathcal P\mathcal Q$, there exists $q_j$ such that
$d(q,q_j)\le\epsilon$. By non-expansiveness,
\[
    d(\mathcal Pq,\mathcal Pq_j)\le \epsilon.
\]
Thus
\[
    \{\mathcal Pq_1,\ldots,\mathcal Pq_M\}
\]
is an $\epsilon$-cover of $\mathcal P\mathcal Q$. Taking the smallest possible
$M$ proves
\[
    \mathcal N(\mathcal P\mathcal Q,\epsilon,d)
    \le
    \mathcal N(\mathcal Q,\epsilon,d).
\]
\end{proof}

%\subsection{Proof of Theorem~\ref{thm:generalization}}
%\label{app:generalization}

\begin{proof}[Proof of Theorem~\ref{thm:generalization}]
    
Let
\[
    M=\mathcal N(\mathcal P\mathcal Q,\epsilon,d),
\]
and let
\[
    \{q_1,\ldots,q_M\}
\]
be an $\epsilon$-cover of $\mathcal P\mathcal Q$. For any fixed $q_j$,
Hoeffding's inequality gives
\[
    \Pr\left(
    \left|
    \mathcal L(q_j)-\widehat{\mathcal L}_{N_{\rm ex}}(q_j)
    \right|
    >
    t
    \right)
    \le
    2\exp(-2N_{\rm ex}t^2),
\]
because $F(q_j;z)\in[0,1]$. A union bound over the cover implies that, with
probability at least $1-\delta$,
\[
    \max_{1\le j\le M}
    \left|
    \mathcal L(q_j)-\widehat{\mathcal L}_{N_{\rm ex}}(q_j)
    \right|
    \le
    \sqrt{
    \frac{
    \log(2M/\delta)
    }{
    2N_{\rm ex}
    }}
    .
\]

Now fix any $q\in\mathcal P\mathcal Q$, and choose $q_j$ from the cover such
that $d(q,q_j)\le\epsilon$. By the $L$-Lipschitz condition,
\[
    |\mathcal L(q)-\mathcal L(q_j)|\le L\epsilon,
    \qquad
    |\widehat{\mathcal L}_{N_{\rm ex}}(q)
    -
    \widehat{\mathcal L}_{N_{\rm ex}}(q_j)|
    \le L\epsilon.
\]
Therefore
\[
\begin{aligned}
    \left|
    \mathcal L(q)-\widehat{\mathcal L}_{N_{\rm ex}}(q)
    \right|
    &\le
    |\mathcal L(q)-\mathcal L(q_j)|
    +
    |\mathcal L(q_j)-\widehat{\mathcal L}_{N_{\rm ex}}(q_j)|\\
    &\qquad
    +
    |\widehat{\mathcal L}_{N_{\rm ex}}(q_j)
    -
    \widehat{\mathcal L}_{N_{\rm ex}}(q)|\\
    &\le
    2L\epsilon
    +
    \sqrt{
    \frac{
    \log(2M/\delta)
    }{
    2N_{\rm ex}
    }}
    .
\end{aligned}
\]
Taking the supremum over $q\in\mathcal P\mathcal Q$ proves the first bound.
The second follows immediately from Lemma~\ref{lem:covering-projection}.
\end{proof}

\section{One-Round Correction and Influence-Balancing}
\label{app:deferred-theory}

This appendix contains the one-round correction and influence-balancing results that support the interpretation of the PEAR score. %They are omitted from the main text to keep the theoretical section focused on equivariance and generalization.

\subsection{One-round correction lower bound}

Let
\[
    Z_i^{(r)}=\mathbf 1\{y_i^{(r)}=y^\star\}
\]
denote whether agent $i$ is correct after round $r$. Define the expected
one-round improvement under candidate edge set $E$ and state $s$ by
\[
    \Delta(E;s)
    =
    \mathbb E\left[
    \sum_{i=1}^n Z_i^{(r)}
    -
    \sum_{i=1}^n Z_i^{(r-1)}
    \,\middle|\, E,s
    \right].
\]

\begin{assumption}[Additive edge-wise improvement model]
\label{assump:additive-edge-improvement}
For each candidate edge set $E$, the expected improvement at each target is the
sum of marginal contributions from its incoming edges:
\[
    \mathbb E\left[
    Z_v^{(r)}-Z_v^{(r-1)}
    \,\middle|\, E,s
    \right]
    =
    \sum_{u:(u,v)\in E}\delta_{uv}(s).
\]
Moreover, there exist constants
$\lambda_T,\lambda_I,\lambda_L\ge0$ and $\varepsilon\ge0$ such that every
marginal contribution satisfies
\[
    \delta_{uv}(s)
    \ge
    \lambda_T T(u,v\mid s)
    -
    \lambda_I\rho_u(s)
    -
    \lambda_L\frac{L(u\mid s)}{\tau_{\rm low}}
    -
    \varepsilon .
\]
\end{assumption}

The assumption encodes three effects: confident disagreement toward uncertain
targets is helpful, low-confidence sources are unreliable, and repeatedly
amplified sources have diminishing marginal value.

\begin{theorem}[One-round correction lower bound]
\label{thm:surrogate-appendix}
Under Assumption~\ref{assump:additive-edge-improvement}, for any candidate edge
set $E$,
\[
    \Delta(E;s)
    \ge
    m\left(
    \lambda_T\widetilde T(E\mid s)
    -
    \lambda_I\widetilde I(E\mid s)
    -
    \lambda_L\widetilde L(E\mid s)
    -
    \varepsilon
    \right).
\]
If PEAR uses weights
$\alpha=(\lambda_T,\lambda_I,\lambda_L)$ and samples
$E\sim Q_\alpha(\cdot\mid s)$, then
\[
    \mathbb E_{E\sim Q_\alpha}[\Delta(E;s)]
    \ge
    m\left[
    \max_{E'\in\mathcal C(s)}S_\alpha(E'\mid s)
    -
    \tau\log|\mathcal C(s)|
    -
    \varepsilon
    \right].
\]
Thus, as $\tau\to0$, PEAR greedily maximizes a lower bound on expected
one-round correction.
\end{theorem}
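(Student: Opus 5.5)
The argument has two parts: a deterministic edge-wise summation for the first inequality, and a Gibbs variational argument for the softmax bound.

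\emph{First inequality.} Sum Assumption~\ref{assump:additive-edge-improvement} over targets. By linearity of expectation,
\[
\Delta(E;s)=\sum_{v}\mathbb E[Z_v^{(r)}-Z_v^{(r-1)}\mid E,s]=\sum_{(u,v)\in E}\delta_{uv}(s).
\]
Applying the per-edge lower bound gives
\[
\Delta(E;s)\ge \lambda_T\sum_{(u,v)\in E}T(u,v\mid s)-\lambda_I\sum_{(u,v)\in E}\rho_u(s)-\frac{\lambda_L}{\tau_{\rm low}}\sum_{(u,v)\in E}L(u\mid s)-\varepsilon|E|.
\]
Each sum is then identified with $m$ times a normalized score:
\begin{itemize}
\item the first sum equals $m\widetilde T(E\mid s)$ by \eqref{eq:targeted-diversity-rate};
\item the second sum, regrouped by source, is $\sum_u \rho_u(s)\,d^{\rm out}_u(E)=m\widetilde I(E\mid s)$ by \eqref{eq:influence-balancing};
\item the third sum is $m\tau_{\rm low}\widetilde L(E\mid s)$ by \eqref{eq:lcf-rate};
\item $|E|=m$, since every candidate graph has $m$ edges.
\end{itemize}
Collecting terms gives exactly $m(S_\alpha(E\mid s)-\varepsilon)$ with $\alpha=(\lambda_T,\lambda_I,\lambda_L)$.

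\emph{Second inequality.} Take expectations of the first bound under $Q_\alpha$. This reduces the claim to
\[
\mathbb E_{E\sim Q_\alpha}[S_\alpha(E\mid s)]\ge \max_{E'}S_\alpha(E'\mid s)-\tau\log|\mathcal C(s)|.
\]
This is the standard Gibbs variational inequality. Write $Z=\sum_{E'}\exp(S_\alpha(E')/\tau)$. Then
\[
\tau\log Z=\max_{p}\bigl\{\mathbb E_p[S_\alpha]+\tau H(p)\bigr\},
\]
and the maximum is attained at $p=Q_\alpha$. Hence
\[
\mathbb E_{Q_\alpha}[S_\alpha]=\tau\log Z-\tau H(Q_\alpha).
\]
We then bound the two terms on the right separately. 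First, $\tau\log Z\ge \max S_\alpha$, because $Z$ is at least its largest summand. Second, $H(Q_\alpha)\le\log|\mathcal C(s)|$, the entropy bound on a finite set. These two facts together give the required inequality.

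An equivalent elementary route avoids the variational formula. Write $\log Q_\alpha(E)=S_\alpha(E)/\tau-\log Z$, so that $S_\alpha(E)=\tau\log Q_\alpha(E)+\tau\log Z$. Averaging under $Q_\alpha$ gives the same identity directly.

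Multiplying the reduced inequality by $m$ yields the second display. As $\tau\to0$, the $\tau\log|\mathcal C(s)|$ term vanishes, so the bound tends to $m(\max S_\alpha-\varepsilon)$. In that limit PEAR's argmax selection maximizes the surrogate lower bound.

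\emph{Main obstacle.} The step needing care is the influence term. It is stated per source in \eqref{eq:influence-balancing} but appears per edge in the assumption. Reindexing $\sum_{(u,v)\in E}\rho_u=\sum_u\rho_u\,d^{\rm out}_u(E)$ must be done carefully, along with the sign conventions, so that the penalties enter with the correct signs. The softmax step is routine once the entropy identity is written down.
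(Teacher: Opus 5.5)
Your proposal is correct and follows the paper's proof step for step. For the first bound, you sum the additive edge-wise assumption over edges, using the regrouping $\sum_{(u,v)\in E}\rho_u(s)=\sum_u\rho_u(s)\,d^{\rm out}_u(E)$ and $|E|=m$. For the second, you combine the Gibbs identity $\mathbb E_{Q_\alpha}[S_\alpha]=\tau\log Z-\tau H(Q_\alpha)$ with $\tau\log Z\ge\max S_\alpha$ and $H(Q_\alpha)\le\log|\mathcal C(s)|$, exactly as the paper does, and your elementary derivation of that identity from $\log Q_\alpha(E)=S_\alpha(E)/\tau-\log Z$ is a small bonus where the paper simply invokes it.
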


\begin{proof}
Assumption~\ref{assump:additive-edge-improvement} models the improvement at
each target $v$ as a sum of marginal contributions over its incoming edges.
Therefore,
\[
    \Delta(E;s)
    =
    \sum_{(u,v)\in E}\delta_{uv}(s).
\]
Using the marginal lower bound and summing over edges gives
\[
\begin{aligned}
    \Delta(E;s)
    &\ge
    \sum_{(u,v)\in E}
    \left[
    \lambda_T T(u,v\mid s)
    -
    \lambda_I\rho_u(s)
    -
    \lambda_L\frac{L(u\mid s)}{\tau_{\rm low}}
    -
    \varepsilon
    \right]\\
    &=
    m\lambda_T\widetilde T(E\mid s)
    -
    m\lambda_I\widetilde I(E\mid s)
    -
    m\lambda_L\widetilde L(E\mid s)
    -
    m\varepsilon\\
    &=
    m\left(
    \lambda_T\widetilde T(E\mid s)
    -
    \lambda_I\widetilde I(E\mid s)
    -
    \lambda_L\widetilde L(E\mid s)
    -
    \varepsilon
    \right).
\end{aligned}
\]

For the softmax guarantee, write
\[
    a_E=S_\alpha(E\mid s),
    \qquad
    p_E=Q_\alpha(E\mid s).
\]
The Gibbs variational identity gives
\[
    \tau\log\sum_{E\in\mathcal C(s)}\exp(a_E/\tau)
    =
    \sum_E p_Ea_E+\tau H(p),
\]
where $H(p)=-\sum_Ep_E\log p_E$. Since
\[
    \tau\log\sum_E\exp(a_E/\tau)\ge \max_E a_E
    \qquad\text{and}\qquad
    H(p)\le \log|\mathcal C(s)|,
\]
we obtain
\[
    \sum_E p_Ea_E
    \ge
    \max_E a_E-\tau\log|\mathcal C(s)|.
\]
Taking expectation in the first part of the theorem with
$E\sim Q_\alpha(\cdot\mid s)$ proves
\[
    \mathbb E_{E\sim Q_\alpha}[\Delta(E;s)]
    \ge
    m\left[
    \max_{E'\in\mathcal C(s)}S_\alpha(E'\mid s)
    -
    \tau\log|\mathcal C(s)|
    -
    \varepsilon
    \right].
\]
\end{proof}

\subsection{Influence monotonicity}

Let
\[
    D_j(E)=d_j^{\rm out}(E)
\]
be the number of targets receiving critiques from source agent $j$.

\begin{theorem}[Influence penalty reduces future amplification]
\label{thm:influence-appendix}
For softmax routing with $\tau>0$,
\[
    \frac{\partial}{\partial \rho_j}
    \mathbb E_{E\sim Q_\alpha(\cdot\mid s)}[D_j(E)]
    =
    -
    \frac{\alpha_I}{\tau m}
    \operatorname{Var}_{E\sim Q_\alpha(\cdot\mid s)}[D_j(E)]
    \le 0 .
\]
Thus, increasing an agent's accumulated influence can only decrease its expected
future out-degree.
\end{theorem}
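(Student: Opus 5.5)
The plan is to treat $Q_\alpha(\cdot\mid s)$ as a Gibbs (exponential-family) distribution over the finite candidate set $\mathcal C(s)$, with $\rho_j$ entering the energy linearly. Then the claimed identity is the usual fact that differentiating a log-partition function gives a covariance.

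First, I isolate the dependence of the score on $\rho_j$. By Eq.~\eqref{eq:influence-balancing},
\[
    \widetilde I(E\mid s)=\frac1m\sum_{i}\rho_i\,d_i^{\rm out}(E).
\]
The other two terms, $\widetilde T$ and $\widetilde L$, depend only on answers and confidences, not on $\rho_j$. Hence
\[
    \frac{\partial S_\alpha(E\mid s)}{\partial\rho_j}=-\frac{\alpha_I}{m}D_j(E).
\]
I treat the candidate family $\mathcal C(s)$ as fixed when $\rho_j$ is varied. This is legitimate because the pool is built from assignments of the base graph and does not depend on the influence values. This is the one modelling point that needs stating explicitly.

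Next, write
\[
    Z=\sum_{E\in\mathcal C(s)}\exp(S_\alpha(E\mid s)/\tau),\qquad Q_\alpha(E\mid s)=\exp(S_\alpha(E\mid s)/\tau)/Z .
\]
The score-function identity gives
\[
    \partial_{\rho_j}\log Q_\alpha(E\mid s)=\frac1\tau\Big(\partial_{\rho_j}S_\alpha(E\mid s)-\mathbb E_{Q_\alpha}[\partial_{\rho_j}S_\alpha]\Big)=-\frac{\alpha_I}{\tau m}\Big(D_j(E)-\mathbb E_{Q_\alpha}[D_j]\Big).
\]
Since $\mathcal C(s)$ is finite and $D_j$ does not depend on $\rho_j$, I can differentiate under the sum:
\[
    \partial_{\rho_j}\mathbb E_{Q_\alpha}[D_j]=\sum_E D_j(E)\,Q_\alpha(E\mid s)\,\partial_{\rho_j}\log Q_\alpha(E\mid s)=-\frac{\alpha_I}{\tau m}\Big(\mathbb E[D_j^2]-\mathbb E[D_j]^2\Big).
\]
The bracket is exactly $\operatorname{Var}_{Q_\alpha}[D_j]$.

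Finally, nonpositivity follows because $\alpha_I\ge 0$, $\tau>0$, $m>0$, and the variance is nonnegative.

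No step is a real obstacle. The only care points are the two already noted: confirming that $\rho_j$ enters $S_\alpha$ only through the $\widetilde I$ term and linearly, and confirming that the candidate set does not move with $\rho_j$.
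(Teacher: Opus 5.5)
Your proof is correct and follows essentially the same route as the paper. Both arguments isolate the linear appearance of $\rho_j$ in the $\widetilde I$ term and apply the score-function (log-derivative) identity to the softmax distribution; there the $\partial_{\rho_j}\log Z$ term only centers $D_j$, which yields $-\frac{\alpha_I}{\tau m}\operatorname{Var}_{Q_\alpha}[D_j]$, and your explicit remark that $\mathcal C(s)$ does not vary with $\rho_j$ states an assumption the paper leaves implicit.
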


\begin{proof}
Fix a state $s$ and write the score as
\[
    S_\alpha(E\mid s)
    =
    B(E\mid s)
    -
    \frac{\alpha_I}{m}
    \sum_{i=1}^n \rho_iD_i(E),
\]
where $B(E\mid s)$ contains the targeted-diversity and low-confidence terms,
which do not depend on $\rho_j$. The softmax distribution is
\[
    Q_\alpha(E\mid s)
    =
    \frac{\exp(S_\alpha(E\mid s)/\tau)}{Z(s)}.
\]
By the score-function, or log-derivative, identity, for any statistic $f(E)$,
\[
    \frac{\partial}{\partial \rho_j}
    \mathbb E_{Q_\alpha}[f(E)]
    =
    \operatorname{Cov}_{Q_\alpha}
    \left(
    f(E),
    \frac{\partial}{\partial\rho_j}
    \log Q_\alpha(E\mid s)
    \right).
\]
Moreover,
\[
    \frac{\partial}{\partial\rho_j}
    \log Q_\alpha(E\mid s)
    =
    -
    \frac{\alpha_I}{\tau m}D_j(E)
    -
    \frac{\partial}{\partial\rho_j}\log Z(s).
\]
The second term is constant in $E$ and therefore vanishes inside the covariance.
Taking $f(E)=D_j(E)$ yields
\[
\begin{aligned}
    \frac{\partial}{\partial \rho_j}
    \mathbb E_{Q_\alpha}[D_j(E)]
    &=
    -
    \frac{\alpha_I}{\tau m}
    \operatorname{Cov}_{Q_\alpha}(D_j(E),D_j(E))\\
    &=
    -
    \frac{\alpha_I}{\tau m}
    \operatorname{Var}_{Q_\alpha}(D_j(E)).
\end{aligned}
\]
Since $\alpha_I\ge0$, $\tau>0$, $m>0$, and variance is nonnegative, the
derivative is nonpositive.
\end{proof}
\newpage
\twocolumn

\section{Additional Experimental Details}
\label{app:exps}

This appendix presents additional experimental details.

\subsection{Debate Configuration}
\label{app:debate}

\paragraph{Debate protocol.}
Our experiments use $n=5$ agents and $R=5$ debate rounds. Each round consists of a critique phase followed by an answer-revision phase. Final predictions are aggregated using majority voting. For fixed-topology baselines, we evaluate clique, ring, star, and chain communication structures. For PEAR, the base interaction graph is a $k$-regular graph with degree $k=2$, and the topology is adaptively reconfigured at each round through role-to-agent routing permutations. The maximum generation budget is 512 tokens per model call.

\paragraph{Routing-objective weights.}
The composite routing score is
\begin{equation}
\begin{aligned}
S(E \mid s_r)
&= \alpha_T \widetilde{T}(E \mid s_r)
 - \alpha_I \widetilde{I}(E) \\
&\quad
 - \alpha_L \widetilde{L}(E \mid s_r),
\end{aligned}
\end{equation}
where $\widetilde{T}$, $\widetilde{I}$, and $\widetilde{L}$ are the normalized routing components defined in
Equations~\ref{eq:targeted-diversity-rate}, ~\ref{eq:influence-balancing}, and ~\ref{eq:lcf-rate}, respectively. 
These quantities are normalized through bounded averaging over the candidate edge set and therefore lie in $[0,1]$.
Unless otherwise specified, we set $(\alpha_T,\alpha_I,\alpha_L)=(0.4,\,0.7,\,0.7)$. The asymmetric weighting reflects a deliberate priority among the three components: we first ensure that routing is balanced in influence ($\alpha_I=0.7$) and gated by source confidence ($\alpha_L=0.7$), since these two terms directly address the structural failure modes of fixed-topology debate, namely positional dominance and unreliable senders. Targeted diversity ($\alpha_T=0.4$) then acts as a refinement layered on top of this stabilized backbone, encouraging cross-answer routing once influence and confidence are already controlled, rather than as a primary driver that could over-amplify divergent but noisy critiques. These weights are held fixed across rounds and across datasets.
 
\paragraph{Confidence and influence smoothing.}
Self-reported confidence is solicited on the ordinal scale $c_i^{(r)}\in\{1,\ldots,5\}$. The accumulated-influence statistic is updated with smoothing coefficient $\beta=0.5$ via
\begin{equation*}
    \rho_s^{(r)}=\beta\rho_s^{(r-1)}+(1-\beta)\,a_s^{(r)},\quad \rho_s^{(0)}=0,
\end{equation*}
where $a_s^{(r)}\in[0,1]$ is the per-round adoption rate computed from the ACCEPT/REJECT decisions in agents' revisions. Confidence and source thresholds are $\tau_{\mathrm{src}}=4$, $\tau_{\mathrm{tgt}}=3$, and $\tau_{\mathrm{low}}=2$ on the $1$--$5$ scale.
 
\paragraph{Candidate pool and routing selection.}
At each round, we construct a candidate pool
$\mathcal{C}_r\subseteq S_n$
by uniformly sampling role-to-agent permutations without replacement subject to the base graph structure.
The candidate pool defines the finite search space over which routing scores are evaluated.

For each candidate assignment $\pi\in\mathcal{C}_r$, we compute the composite routing score
$S(E(\pi)\mid s_r)$.
The router then samples the routing assignment according to the state-conditional softmax distribution
\begin{equation*}
Q_r(\pi\mid s_r)
=
\frac{
\exp\!\big(S(E(\pi)\mid s_r)/\tau\big)
}{
\sum_{\pi'\in\mathcal{C}_r}
\exp\!\big(S(E(\pi')\mid s_r)/\tau\big)
},
\end{equation*}
where $\tau$ is the routing temperature.
As $\tau\to 0$, the routing policy approaches greedy argmax selection.

\paragraph{Compute infrastructure.}
All experiments are conducted on a single node equipped with 4 NVIDIA H100 80GB GPUs. Each experiment is run with data-parallel or sequential execution depending on the setting. This setup ensures reproducibility under a fixed compute budget.

\subsection{Full-Dataset Results}
\label{app:full-results}

Table~\ref{tab:full-results} reports results on the full evaluation sets for two representative open-source models. Compared with the results in Section~\ref{sec:experiments}, which uses sampled subsets for computational efficiency, these experiments evaluate all methods on the complete TruthfulQA and MATH-500 benchmarks.

Across both datasets and models, PEAR consistently achieves the best final-answer accuracy. On TruthfulQA, the gains over fixed-topology baselines remain substantial, indicating that adaptive communication continues to improve factual robustness even at full evaluation scale. On MATH-500, where multi-step reasoning errors frequently propagate through the debate process, PEAR also maintains a clear advantage over static topologies, particularly compared with star and ring structures.

These results demonstrate that the improvements observed in the main experiments are not artifacts of subset evaluation and remain stable when scaling to the full datasets.

\begin{table}[h]
\centering
\scriptsize
\setlength{\tabcolsep}{3pt}
\renewcommand{\arraystretch}{1.05}
\caption{
Final-answer accuracy on full datasets.
%Best results are bolded.
}
\label{tab:full-results}
\begin{tabular}{lcccc}
\toprule
\textbf{Method}
& \multicolumn{2}{c}{\textbf{TruthfulQA}}
& \multicolumn{2}{c}{\textbf{MATH-500}} \\
\cmidrule(lr){2-3}
\cmidrule(lr){4-5}
& Qwen2.5-14B & Llama-3.1-8B
& Qwen2.5-14B & Llama-3.1-8B \\
\midrule
CoT
& 0.709 & 0.653
& 0.212 & 0.138 \\

CoT-SC
& 0.785 & 0.675
& 0.256 & 0.142 \\

Clique
& 0.857 & 0.785
& 0.420 & 0.206 \\

Star
& 0.834 & 0.722
& 0.334 & 0.208 \\

Chain
& 0.795 & 0.745
& 0.348 & 0.290 \\

Ring
& 0.827 & 0.757
& 0.344 & 0.218 \\

PEAR
& \textbf{0.868} & \textbf{0.824}
& \textbf{0.442} & \textbf{0.298} \\
\bottomrule
\end{tabular}
\end{table}

%\newpage

\begin{figure*}[t]
    \centering
    \includegraphics[width=\linewidth]{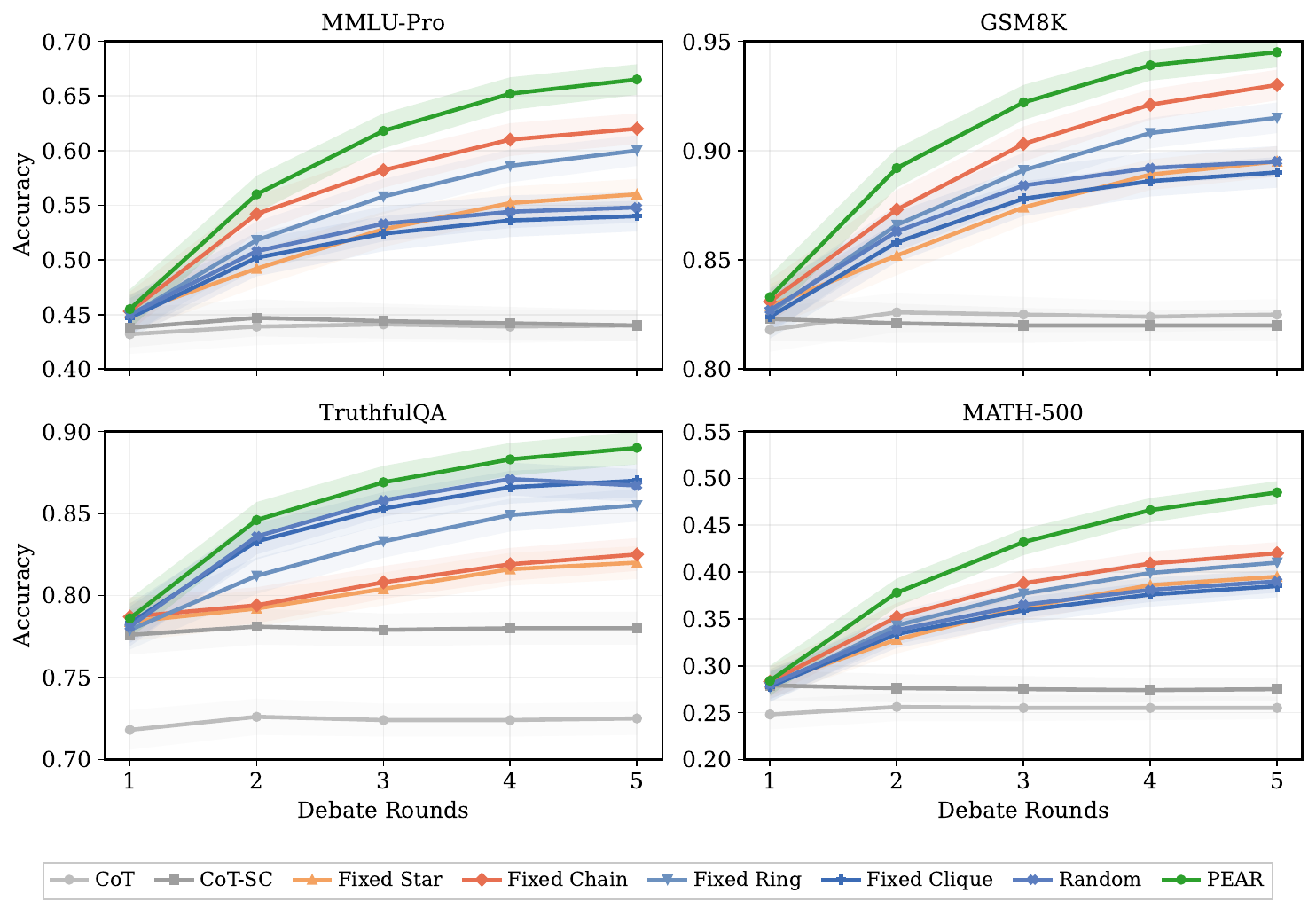}
    \caption{Round-wise accuracy on Qwen-2.5-14B, averaged across MMLU-Pro, TruthfulQA, GSM8K, and MATH-500. PEAR continues to improve with additional debate rounds, while fixed-topology baselines plateau earlier.}
    \label{fig:rounds-qwen}
\end{figure*}
 
\begin{figure*}[t]
    \centering
    \includegraphics[width=\linewidth]{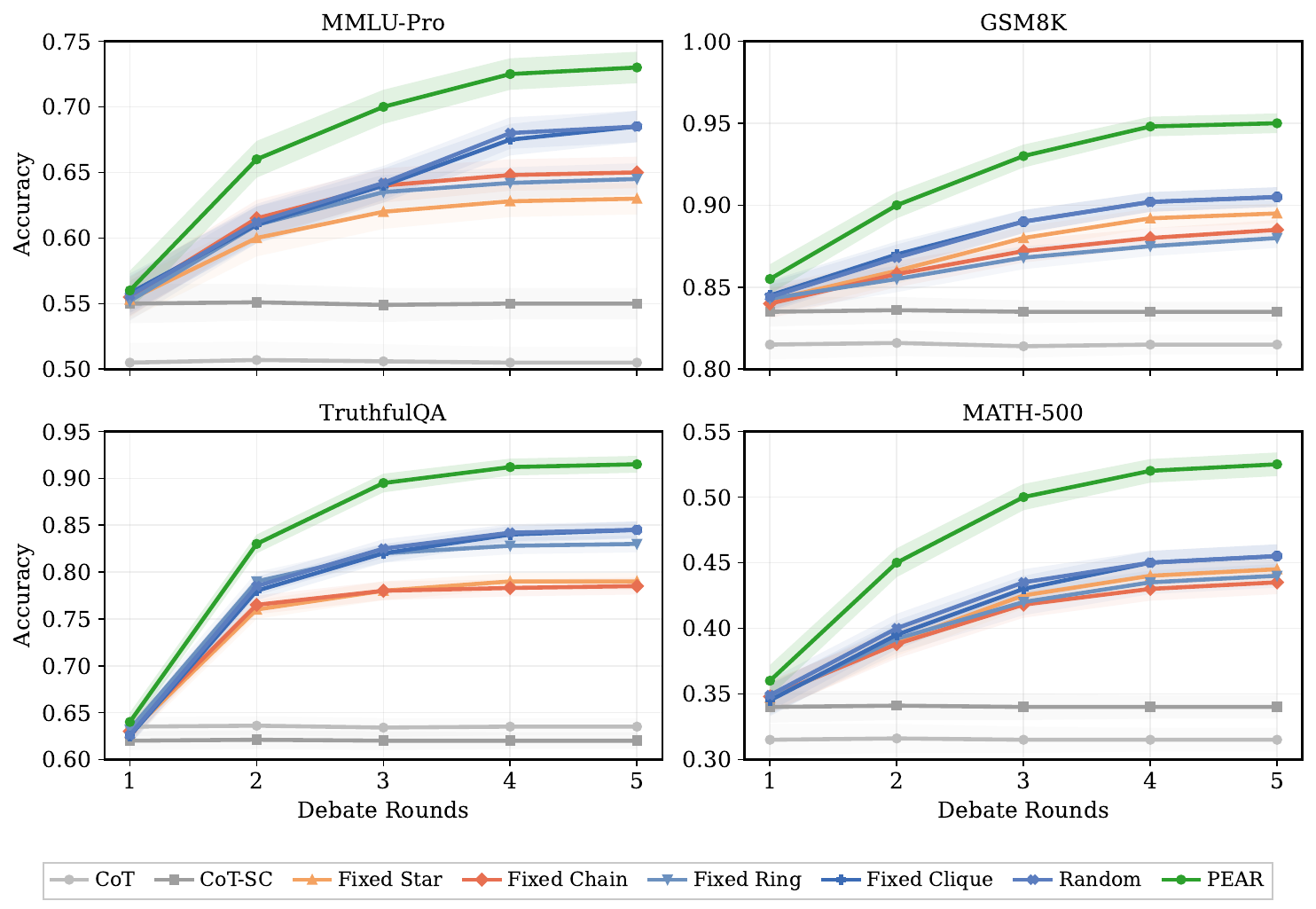}
    \caption{Round-wise accuracy on GPT-5.4-nano, averaged across MMLU-Pro, TruthfulQA, GSM8K, and MATH-500. PEAR maintains a consistent margin over baselines across all rounds.}
    \label{fig:rounds-gpt}
\end{figure*}

\begin{table*}[t]
\centering
%\small
\caption{Trajectory-level diagnostics, averaged over open-source model--dataset settings. W2R / R2W are wrong-to-right and right-to-wrong update rates; Net $=$ W2R $-$ R2W is the net correction rate. Accept.\ is the critique acceptance rate. Cross-Ans is the fraction of routed edges whose source and target hold different answers, capturing targeted diversity in practice. Src Conf is the source-side mean self-reported confidence, normalized to $[0,1]$. Inf.\ Ent.\ is the normalized influence entropy.}
\label{tab:diagnostics}
\begin{tabular}{lccccccc}
\toprule
\textbf{Condition} & \textbf{W2R} & \textbf{R2W} & \textbf{Net} & \textbf{Accept.} & \textbf{Cross-Ans} & \textbf{Src Conf} & \textbf{Inf. Ent.} \\
\midrule
CoT          & 0.000 & 0.000 & 0.000 & --    & --    & --    & 0.000 \\
CoT-SC       & 0.000 & 0.000 & 0.000 & --    & --    & --    & 1.000 \\
Fixed Star   & 0.324 & 0.112 & 0.212 & 0.567 & 0.431 & 0.592 & 0.961 \\
Fixed Chain  & 0.330 & 0.127 & 0.203 & 0.556 & 0.452 & 0.601 & 0.827 \\
Fixed Ring   & 0.315 & 0.111 & 0.204 & 0.593 & 0.438 & 0.594 & 0.942 \\
Fixed Clique & 0.326 & 0.118 & 0.208 & 0.597 & 0.448 & 0.589 & 0.969 \\
Random       & 0.318 & 0.115 & 0.203 & 0.585 & 0.498 & 0.612 & 0.972 \\
PEAR       & \textbf{0.339} & \textbf{0.096} & \textbf{0.243} & \textbf{0.606} & \textbf{0.676} & \textbf{0.784} & \textbf{0.979} \\
\bottomrule
\end{tabular}
\end{table*}

\begin{figure*}[t]
    \centering
    \includegraphics[width=0.9\linewidth]{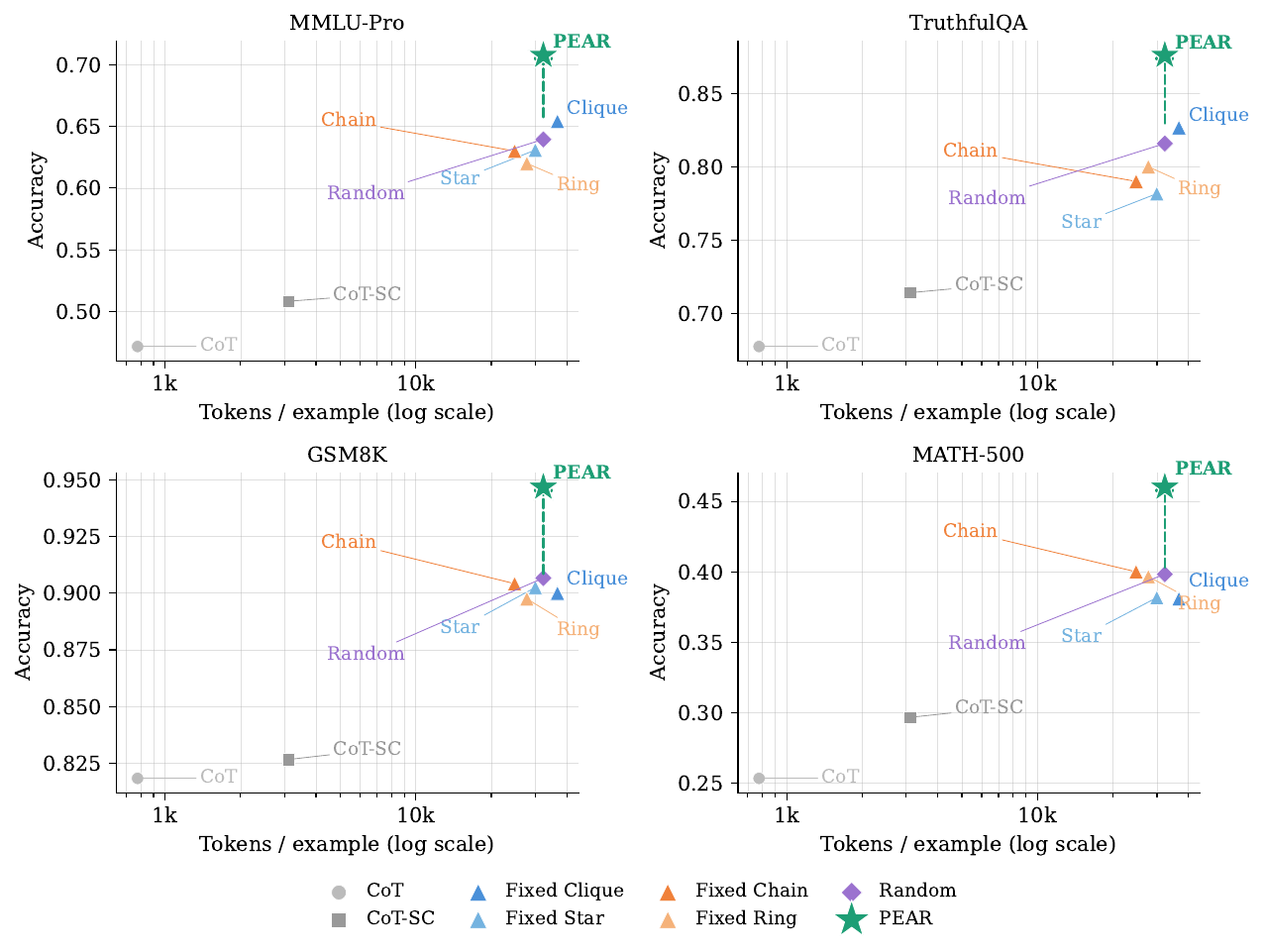}
    \caption{Accuracy versus per-example token usage (log scale) on the four benchmarks. Each panel reports accuracy averaged across the six backbone models. PEAR (green star) lies on the Pareto frontier of every panel; the dashed bracket marks the accuracy lift of PEAR over the best fixed-topology baseline at the same cost tier.}
    \label{fig:pareto}
\end{figure*}

\subsection{Round-wise Accuracy Curves}
\label{app:rounds}
 
Figures~\ref{fig:rounds-qwen} and~\ref{fig:rounds-gpt} plot final-answer accuracy as a function of debate round number for PEAR and the topology baselines on Qwen-2.5-14B and GPT-5.4-nano, respectively. Both panels report accuracy averaged across the four benchmarks. PEAR remains above the baseline curves after the first round and continues to improve with additional rounds, whereas fixed-topology baselines saturate earlier or exhibit non-monotonic behavior consistent with error cascades.

\paragraph{Initial round versus later rounds.}
At round $1$ all debate methods are tightly clustered (within roughly $2$ accuracy points), since the first round only consumes the initial independent responses and the routing decision has not yet been informed by any debate state. The gap between PEAR and the fixed-topology baselines then widens monotonically in subsequent rounds, indicating that the adaptive router accumulates benefit from the evolving state rather than from a single early-round choice.
 
\paragraph{Non-monotonic baselines and error cascades.}
Several fixed-topology curves are non-monotonic---for instance, Fixed Star peaks at round $2$ and retreats slightly at round $3$ on both backbones. This is the error-cascade mechanism in action: an initially incorrect hub influences its neighbors, and later rounds reinforce rather than correct the resulting consensus. PEAR's curves remain non-decreasing in every panel, consistent with the influence-balancing component progressively dampening any agent that begins to dominate.

\subsection{Trajectory-Level Diagnostics}
\label{app:diagnostics}

This section expands on the trajectory-level diagnostics reported in Table~\ref{tab:diagnostics}. We first give precise definitions for each diagnostic metric, then discuss additional patterns that complement the main-text summary.
 
\paragraph{Metric definitions.}
 
For each agent $i\in[n]$ and round $r\in\{1,\ldots,R\}$, let $y_i^{(r)}$ be the agent's current answer, $c_i^{(r)}\in\{1,\ldots,5\}$ its self-reported confidence, $\rho_i^{(r)}\in[0,1]$ the router-side accumulated influence, and $y^\star$ the gold answer. Let $E^{(r)}$ denote the round-$r$ edge set. All quantities below are first computed per query and then averaged over the open-source model--dataset settings.
 
\paragraph{Update-level metrics.}
The wrong-to-right rate is the fraction of agent updates that change an initially wrong answer into the correct one,
\begin{equation*}
    \mathrm{W2R}=\frac{\sum_{i,r}\mathbf{1}\!\left[y_i^{(r-1)}\!\neq\! y^\star,\;y_i^{(r)}\!=\!y^\star\right]}{\sum_{i,r}\mathbf{1}\!\left[y_i^{(r-1)}\!\neq\! y^\star\right]},
\end{equation*}
and the right-to-wrong rate $\mathrm{R2W}$ is defined symmetrically by swapping correct and incorrect. The \emph{net correction rate} is $\mathrm{Net}=\mathrm{W2R}-\mathrm{R2W}$; positive values mean that beneficial updates outweigh harmful flips. The \emph{critique acceptance rate} (Accept.) is the fraction of incoming critiques that the target marks ACCEPT (rather than REJECT) in its structured \texttt{critique\_response}, providing a target-side validation of each routed argument.
 
\paragraph{Edge-level metrics.}
The \emph{cross-answer edge rate} measures whether routing exposes targets to genuinely different viewpoints,
\begin{align*}
\mathrm{Cross\text{-}Ans}
&=
\frac{1}{\sum_r |E^{(r)}|}
\sum_r
\sum_{(s \to t)\in E^{(r)}} \\
&\quad
\mathbf{1}
\!\left(
y_s^{(r-1)}
\neq
y_t^{(r-1)}
\right).
\end{align*}
\emph{Source confidence} (Src Conf) is the mean self-reported confidence on the source side of each routed edge, normalized to $[0,1]$ via $(c-1)/4$; higher values indicate that critiques disproportionately originate from confident agents.
 
\paragraph{System-level metric.}
The \emph{influence entropy} (Inf.\ Ent.) is the normalized Shannon entropy of the final-round influence distribution,
\begin{align*}
\mathrm{Inf.\ Ent.}
&=
-\frac{1}{\log n}
\sum_{i=1}^n
\tilde{\rho}_i \log \tilde{\rho}_i, \\
\tilde{\rho}_i
&=
\rho_i^{(R)}
\Big/
\sum_j \rho_j^{(R)} .
\end{align*}
A value of $1$ indicates uniform influence across agents; values close to $0$ indicate that a single agent dominates the debate.

\paragraph{Concentrated information flow in Fixed Chain.}
Fixed Chain attains a competitive Net ($0.203$) but the lowest influence entropy among all debate baselines ($0.827$, vs.\ $\geq 0.94$ for the others). Because critiques in a chain flow strictly toward the head, the head agent accumulates disproportionate influence regardless of correctness. This is consistent with the failure mode described in Section~\ref{sec:introduction}: when the structurally privileged agent happens to be wrong, errors propagate persistently downstream, and the aggregate Net masks the underlying brittleness.
 
\paragraph{Diversity is necessary but not sufficient.}
Random routing achieves the highest Cross-Ans rate among baselines ($0.498$), confirming that simply breaking the fixed topology improves viewpoint exposure. Yet Random still trails PEAR by $18$ points on Cross-Ans ($0.498$ vs.\ $0.676$), and its accuracy and Net are no better than the other fixed topologies. \emph{Indiscriminate} diversity therefore does not translate into corrective gains; targeted diversity, routing from confident dissenting sources to uncertain targets, is what makes the difference.
 
\paragraph{Confidence and acceptance reinforce one another under PEAR.}
For every baseline, the source-side mean confidence (Src Conf) falls in the narrow band $[0.589,\,0.612]$, essentially indistinguishable from the population average. PEAR's value of $0.784$ is the only one substantially above this band, indicating that low-confidence filtering effectively biases the routing distribution toward confident sources. Mirroring this, PEAR obtains the highest critique acceptance rate ($0.606$): because routed critiques disproportionately come from agents with substantive corrections to offer, targets accept them more often than under any baseline.
 
\paragraph{Why CoT-SC's entropy is not comparable.}
CoT and CoT-SC are reported in Table~\ref{tab:diagnostics} for reference; both have $\mathrm{W2R}=\mathrm{R2W}=0$ because no debate updates occur. CoT-SC's nominal $\mathrm{Inf.\ Ent.}=1.000$ reflects fully independent samples without any shared influence flow and is not directly comparable with PEAR's $0.979$, which is the highest among methods that actually route influence between agents.

\subsection{Computational Overhead}
\label{app:overhead}
This appendix details the per-example computational cost of PEAR relative to the topology baselines. Token usage is computed per query as the sum of input and output tokens across all LLM calls in a debate, averaged over the same model--dataset settings as Table~\ref{tab:main-results}.

\begin{table*}[t]
\centering
\caption{Exact McNemar's test comparing PEAR against the strongest
fixed-topology baseline on the full evaluation sets. $b$: baseline
wrong, PEAR correct. $c$: baseline correct, PEAR wrong. Significance
levels: $^{*}p<0.05$, $^{**}p<0.01$, $^{***}p<0.001$.}
\label{tab:mcnemar}
\begin{tabular}{llrrrrrrl}
\toprule
Dataset & Model & Best baseline & PEAR & $b$ & $c$ & $\chi^2$ & $p$ (exact) & Sig. \\
\midrule
TruthfulQA & Qwen-2.5-14B      & 87.6 & 89.1 & 33 & 14 & 6.894  & 0.0079 & \sig{**} \\
TruthfulQA & Llama-3.1-8B      & 78.5 & 81.5 & 40 & 15 & 10.473 & 0.0010 & \sig{**} \\
TruthfulQA & GPT-5.4-nano      & 84.5 & 91.5 & 16 & 2  & 9.389  & 0.001  & \sig{**} \\
TruthfulQA & Claude-Haiku-4.5  & 85.5 & 90.5 & 12 & 2  & 5.786  & 0.013  & \sig{*}  \\
MATH-500   & Qwen-2.5-14B      & 42.0 & 48.6 & 53 & 20 & 14.027 & 0.0001 & \sig{***}\\
MATH-500   & Llama-3.1-8B      & 22.6 & 25.6 & 32 & 17 & 4.000  & 0.0444 & \sig{*}  \\
MATH-500   & GPT-5.4-nano      & 46.0 & 52.5 & 17 & 4  & 6.857  & 0.007  & \sig{**} \\
MATH-500   & Claude-Haiku-4.5  & 42.5 & 49.5 & 18 & 4  & 7.682  & 0.004  & \sig{**} \\
\bottomrule
\end{tabular}
\end{table*}

Figure~\ref{fig:pareto} plots accuracy against per-example token usage (log scale) on all four benchmarks. PEAR (green star) lies on the Pareto frontier of every panel: no fixed-topology baseline reaches its accuracy at any cost. PEAR also Pareto-dominates the most expensive baseline, Fixed Clique, matching or exceeding its accuracy on every dataset while using $12\%$ fewer tokens ($32.3$k vs.\ $36.7$k). At the other extreme, Fixed Chain reduces cost by $23\%$ relative to PEAR ($24.8$k vs.\ $32.3$k) but loses $6.6$ accuracy points on average---a poor trade-off compared with simply running PEAR at the same sparse $k$-regular budget. Single-agent baselines (CoT and CoT-SC) are far cheaper at $0.8$k--$3.1$k tokens but trail debate methods by $16$--$19$ accuracy points and therefore lie well below the Pareto frontier.

Together with the trajectory diagnostics in Appendix~\ref{app:diagnostics}, these results confirm that PEAR's gains come from \emph{how} it routes a fixed communication budget, not from spending more compute.

%\newpage
%\twocolumn

\section{Statistical Significance Tests}
\label{app:rebuttal-stats}

This section reports the paired significance tests, power analysis,
and rank-correlation checks for stronger statistical support of the reported gains. All tests are
computed on the two full-evaluation-set conditions from
Table~\ref{tab:full-results} (TruthfulQA, $N=817$; MATH-500,
$N=500$), across four backbones: two open-source models
(Qwen2.5-14B, Llama-3.1-8B) and two closed-source models
(GPT-5.4-nano, Claude-Haiku-4.5).

\subsection{McNemar's Test}
\label{app:mcnemar}

For each (dataset, backbone) pair we construct a $2\times2$
contingency table over paired predictions on the same test
instances, cross-tabulating whether the strongest fixed-topology
baseline and PEAR were each correct or incorrect. Only the
discordant cells are informative: $b$ counts instances the baseline
got wrong but PEAR got correct, and $c$ counts instances the
baseline got correct but PEAR got wrong; instances where both
methods agree (both correct or both incorrect) carry no information
about which method is better and are excluded from the test
statistic by construction. The exact (binomial) McNemar test then
evaluates $H_0: P(b) = P(c) = 0.5$, i.e., that the baseline and PEAR
are equally likely to be correct whenever they disagree. A large
imbalance between $b$ and $c$ indicates that one method is
systematically better on the disputed instances.

\begin{table*}[t]
\centering
\caption{Sample sizes, win rates, 95\% Wilson confidence intervals,
and post-hoc statistical power for each McNemar comparison in
Table~\ref{tab:mcnemar}. $n = b + c$ is the number of discordant
pairs; $\hat{p} = b/(b+c)$ is PEAR's win rate on the disputed
instances.}
\label{tab:power}
\begin{tabular}{llrrlr}
\toprule
Dataset & Model & $n = b+c$ & $\hat{p} = b/(b+c)$ & 95\% Wilson CI & Power \\
\midrule
TruthfulQA & Qwen-2.5-14B     & 47 & 0.702 & [0.560, 0.813] & 0.814 \\
TruthfulQA & Llama-3.1-8B     & 55 & 0.727 & [0.598, 0.827] & 0.938 \\
TruthfulQA & GPT-5.4-nano     & 18 & 0.889 & [0.672, 0.969] & 0.966 \\
TruthfulQA & Claude-Haiku-4.5 & 14 & 0.857 & [0.601, 0.960] & 0.845 \\
MATH-500   & Qwen-2.5-14B     & 73 & 0.726 & [0.614, 0.815] & 0.980 \\
MATH-500   & Llama-3.1-8B     & 49 & 0.653 & [0.513, 0.771] & 0.586 \\
MATH-500   & GPT-5.4-nano     & 21 & 0.810 & [0.600, 0.923] & 0.864 \\
MATH-500   & Claude-Haiku-4.5 & 22 & 0.818 & [0.615, 0.927] & 0.899 \\
\bottomrule
\end{tabular}
\end{table*}

\begin{table*}[t]
\centering
\caption{Spearman's rank-order correlation between method rankings
under the subsampled and full-dataset evaluation settings.}
\label{tab:spearman}
\begin{tabular}{lrr}
\toprule
Dataset $\times$ Model & Spearman's $\rho$ & $p$-value \\
\midrule
TruthfulQA $\times$ Qwen2.5-14B  & 0.893 & 0.007  \\
TruthfulQA $\times$ Llama-3.1-8B & 0.893 & 0.007  \\
MATH-500 $\times$ Qwen2.5-14B    & 0.786 & 0.036  \\
MATH-500 $\times$ Llama-3.1-8B   & 0.964 & 0.0004 \\
\bottomrule
\end{tabular}
\end{table*}

As shown in Table \ref{tab:mcnemar}, all eight (dataset, backbone) pairs are statistically significant at
$p<0.05$, with six of the eight significant at $p<0.01$, supporting
the robustness of PEAR's improvement over the strongest
fixed-topology baseline.

\subsection{Statistical Power Analysis}
\label{app:power}

Because the power of McNemar's test depends only on the number of
discordant pairs $n = b+c$ rather than on the full sample size $N$,
Table \ref{tab:power} reports the exact 95\% Wilson confidence interval and
the post-hoc power for the discordant-pair win rate $\hat{p} = b /
(b+c)$ for each comparison in Table~\ref{tab:mcnemar}.

Seven of the eight comparisons achieve power $\geq 0.81$, and all
eight 95\% confidence intervals for the discordant-pair win rate
exclude $0.5$. The one comparison with comparatively lower power
(MATH-500 $\times$ Llama-3.1-8B, power $= 0.586$) is also the one
with the weakest significance level in Table~\ref{tab:mcnemar}
($p=0.044$, single asterisk), which is internally consistent rather
than indicative of an underpowered test artifact.

\subsection{Spearman's Rank-Order Correlation}
\label{app:spearman}

To check whether the ranking of debate methods obtained on the
subsampled evaluation sets (Table~1 of the main paper) is consistent
with the ranking on the corresponding full evaluation sets
(Table~\ref{tab:full-results}), we compute Spearman's rank-order
correlation between the two accuracy rankings across all debate
methods, for each of the four (dataset, backbone) pairs for which
full-dataset results are available.

As shown in Table \ref{tab:spearman}, all four (dataset, backbone) pairs show strong, statistically
significant rank agreement ($\rho = 0.79$--$0.96$, $p<0.05$) between
the subsampled and full-dataset rankings. Critically, PEAR retains
the top rank among all evaluated methods in every setting under both
the subsampled and the full-dataset evaluation, indicating that the
subsampled results in the main text are not an artifact of the
particular subset chosen.

\subsection{Notes on Sample Sizes}

For each (dataset, model) pair, the full evaluation set size is
$N=817$ for TruthfulQA and $N=500$ for MATH-500. The $b, c$ values
reported in Table~\ref{tab:mcnemar} are the discordant pairs within
this full set; the remaining $N-b-c$ instances are cases where PEAR
and the baseline agree (both correct or both incorrect) and are
correctly excluded from McNemar's test by design -- this is how the
test is defined, not a subsampling choice.

%\newpage
%\twocolumn

\section{Prompt Templates}
\label{app:prompts}

In this section we provide the prompt templates used for agent reasoning and dataset-specific task formatting. Each template is designed to guide the LLM's behavior in specific contexts within the PEAR framework.

\subsection{Agent System Instruction}

\begin{prompt}
You are a careful reasoner participating in a structured multi-agent debate. Judge every critique by the logic of the problem, not by social agreement or majority pressure. Return only valid JSON in the requested schema, with no markdown fences and no extra commentary.
\end{prompt}

\subsection{Confidence Rubric}

\begin{prompt}
Confidence rubric (integer 1-5):\\
1 = no reliable basis; mostly guessing, unable to solve, or the selected answer is just a placeholder.\\
2 = low confidence; some clue or partial reasoning, but you cannot rule out multiple plausible alternatives.\\
3 = moderate confidence; reasoning supports the answer, but there is a real unresolved doubt, unchecked step, or possible competing option.\\
4 = high confidence; reasoning is complete and checked, and only minor residual uncertainty remains.\\
5 = fully verified; every necessary step has been checked and all plausible alternatives or answer choices are ruled out.\\
Use 1 or 2 whenever you are guessing, relying on incomplete reasoning, or cannot eliminate serious alternatives. Do not use 5 unless the solution is fully verified; do not default to 4 or 5.
\end{prompt}

\subsection{Initial Answer Template}

\begin{prompt}
You are Agent \{agent\_id\}. Solve the problem independently.\\
Use the task-specific answer format stated in the PROBLEM block.\\
\\
You should provide a step-by-step justification for your answer. The reasoning should be clear, logical, and directly support your final answer.\\
After solving, assign a confidence score to your own final answer using the 1-5 rubric below.\\
Calibrate the score strictly: use low confidence when your reasoning is incomplete or competing answers remain plausible.\\
\\
PROBLEM\\
\{question\}\\
\\
\{CONFIDENCE RUBRIC\}\\
\\
Return valid JSON with exactly these keys:\\
\{\\
\quad "answer": "task-specific answer token only",\\
\quad "confidence": 3,\\
\quad "reasoning": "concise step-by-step justification"\\
\}\\
The confidence field must be an integer from 1 to 5.
\end{prompt}

\subsection{Answer Update Template}

\begin{prompt}
You are Agent \{agent\_id\}. Update your answer using only critiques that identify a real error in your reasoning.\\
\\
PROBLEM\\
\{question\}\\
\\
\{CONFIDENCE RUBRIC\}\\
\\
YOUR PREVIOUS ANSWER\\
Answer: \{previous\_answer\}\\
Confidence: \{previous\_confidence\}\\
Reasoning: \{previous\_reasoning\}\\
\\
CRITIQUES YOU RECEIVED\\
\{critiques\}\\
\\
For each critique, explicitly ACCEPT or REJECT it. Accept a critique only when its correction is logically sound for this problem.\\
\\
After updating, assign a new confidence score to your own updated answer using the 1-5 rubric above. Calibrate it strictly: lower the score when accepted critiques leave unresolved uncertainty or multiple plausible answers.\\
\\
Return valid JSON with exactly these keys:\\
\{\\
\quad "answer": "task-specific updated answer token only",\\
\quad "confidence": 3,\\
\quad "reasoning": "updated step-by-step justification",\\
\quad "critique\_response": \{\\
\quad\quad "<source\_agent\_id>": \{"decision": "ACCEPT", "reason": "one sentence"\}\\
\quad \}\\
\}\\
Use only ACCEPT or REJECT. The confidence field must be an integer from 1 to 5.
\end{prompt}

\subsection{Critique Generation Template}

\begin{prompt}
You are Agent \{agent\_id\}. Review the target solutions below for logical correctness. Do not judge by whether the target answer matches your own; judge only by the reasoning steps.\\
\\
PROBLEM\\
\{question\}\\
\\
YOUR CURRENT ANSWER\\
Answer: \{own\_answer\}\\
Confidence: \{own\_confidence\}\\
Reasoning: \{own\_reasoning\}\\
\\
SOLUTIONS TO REVIEW\\
\{targets\}\\
\\
For each target, identify the first incorrect step if one exists. If no error is identified, say so.\\
\\
Return valid JSON with exactly this shape:\\
\{\\
\quad "reviews": [\\
\quad\quad \{\\
\quad\quad\quad "target": 1,\\
\quad\quad\quad "step\_loc": "first incorrect step, or No error identified",\\
\quad\quad\quad "correction": "correction for that step only, or empty string",\\
\quad\quad\quad "assessment": "Strong"\\
\quad\quad \}\\
\quad ]\\
\}\\
assessment must be one of Strong, Acceptable, Flawed.
\end{prompt}

\subsection{Per-Turn Agent Communication Template}

\begin{prompt}
You are Agent \{agent\_id\} in a structured multi-agent debate.\\
Use the task-specific answer format stated in the PROBLEM block. Judge neighbor messages by logic and evidence, not by majority pressure.\\
\\
PROBLEM\\
\{question\}\\
\\
\{topology\_info\}\\
YOUR PRIVATE HISTORY\\
\{private\}\\
\\
VISIBLE NEIGHBOR MESSAGES\\
\{transcript\}\\
\\
Evaluate every distinct neighbor argument. ACCEPT only claims that fix a real error; REJECT claims with invalid logic, wrong facts, or irrelevant reasoning. If messages conflict, weigh the strongest argument on each side before updating your answer.\\
\\
\{CONFIDENCE RUBRIC\}\\
\\
Calibrate the score strictly: use low confidence when your reasoning is incomplete or competing answers remain plausible.\\
\\
Return valid JSON with exactly these keys:\\
\{\\
\quad "answer": "task-specific answer token only",\\
\quad "confidence": 3,\\
\quad "reasoning": "concise justification of your current answer",\\
\quad "neighbor\_assessment": "one or two sentences naming accepted or rejected claims"\\
\}\\
The confidence field must be an integer from 1 to 5.
\end{prompt}

% \subsection{Judge Template}

% \begin{prompt}
% You are an impartial judge. Given the question and the agents' candidate answers, output the single best answer.\\
% \\
% Question: \{question\}\\
% \\
% Candidates: \{candidates\}\\
% \\
% Respond with just the answer token.
% \end{prompt}

\subsection{Dataset-Specific Task Templates}

\subsubsection{GSM8K (Grade-School Math)}

\begin{prompt}
Task type: GSM8K grade-school math.\\
Answer format: provide only the final numeric value in the JSON answer field. Do not include units, commas, or explanatory text in the answer field.\\
\\
Problem:\\
\{question\}
\end{prompt}

\subsubsection{MMLU-Pro (Multiple-Choice)}

\begin{prompt}
Task type: MMLU-Pro multiple-choice.\\
Answer format: provide only the letter of the best option in the JSON answer field. Valid letters are those shown in Options.\\
\\
Question:\\
\{question\}\\
\\
Options:\\
\{options\}
\end{prompt}

\subsubsection{MATH-500 (Competition Math)}

\begin{prompt}
Task type: MATH-500 competition math.\\
Answer format: provide only the final mathematical expression in the JSON answer field, in the form requested by the problem.\\
\\
Problem:\\
\{question\}
\end{prompt}

\subsubsection{TruthfulQA (Truthfulness-Based Choice)}

\begin{prompt}
Task type: TruthfulQA multiple-choice.\\
Answer format: provide only the letter of the most truthful option in the JSON answer field.\\
\\
Question:\\
\{question\}\\
\\
Options:\\
\{options\}
\end{prompt}

%\newpage
%\twocolumn

\section{Case Study of Multi-Round Debate Dynamics}
\label{app:case_study}

We present a case study illustrating how PEAR changes agent answers across multiple debate rounds. The example is adapted from a MMLU-Pro accounting question. We use five agents and a sparse $k$-regular communication graph with $k=2$, so every agent receives exactly two critiques in each debate round.

\paragraph{Problem.}
The Alfors Company had a beginning inventory of \$30,000 on January 1, 1974. During the year, purchases amounted to \$87,500 and net sales were \$102,000. Assuming that the gross profit rate is 40\% of net sales, what is the ending inventory using the gross profit method of inventory evaluation?

\paragraph{Correct computation.}
We use the following compact notation: GP is gross profit, COGS is cost of goods sold, GA is goods available, and EI is ending inventory.
\[
\begin{aligned}
\mathrm{GP} &= 0.40\times102{,}000 = 40{,}800,\\
\mathrm{COGS} &= 102{,}000-\mathrm{GP} = 61{,}200,\\
\mathrm{GA} &= 30{,}000+87{,}500 = 117{,}500,\\
\mathrm{EI} &= \mathrm{GA}-\mathrm{COGS} = 56{,}300.
\end{aligned}
\]
Thus, the correct option is \texttt{D.\$56,300}.

\paragraph{Routing and influence notation.}
At the beginning of each debate round $r$, PEAR constructs a candidate sparse communication graph. Each target agent receives $k=2$ incoming critiques. The routing score is
\[
\begin{aligned}
S(G_r)={}&\alpha_T T(G_r)-\alpha_I I(G_r)\\
&-\alpha_L L(G_r).
\end{aligned}
\]
where $T(G_r)$ rewards high-confidence dissenting sources correcting lower-confidence targets, $I(G_r)$ penalizes over-exposure to already influential agents, and $L(G_r)$ penalizes low-confidence sources. A graph is sampled from the candidate set using softmax over $S(G_r)$ rather than deterministic argmax.

Agent influence is updated after each round using
\[
\begin{aligned}
\mathrm{infl}_i^{(r)}={}&\beta\,\mathrm{infl}_i^{(r-1)}\\
&+(1-\beta)\,a_i^{(r)},
\end{aligned}
\]
where $a_i^{(r)}$ denotes agent $i$'s critique adoption rate in round $r$.

\begin{roundbox}{Round 0: Initial Reasoning}

\textbf{Agent 1:} I compute gross profit as $40\%$ of sales, so gross profit is $40{,}800$ and cost of goods sold is $61{,}200$. Goods available for sale are $117{,}500$, so ending inventory is $56{,}300$. I think this corresponds to option \texttt{C}. \textit{Confidence: 3}.

\textbf{Agent 2:} I first compute $30{,}000+87{,}500=117{,}500$. Then I use a ratio adjustment because the gross profit method relates sales and cost. I get an adjusted estimate near $52{,}500$, so I choose option \texttt{E}. \textit{Confidence: 3}.

\textbf{Agent 3:} Gross profit is $0.40\times 102{,}000=40{,}800$. Cost of goods sold is $102{,}000-40{,}800=61{,}200$. Goods available for sale are $117{,}500$, so ending inventory is $117{,}500-61{,}200=56{,}300$. I choose option \texttt{D}. \textit{Confidence: 4}.

\textbf{Agent 4:} I compute cost of goods sold as $60\%$ of net sales, which is $61{,}200$. Subtracting from goods available gives $56{,}300$, but I am unsure which option matches it, so I choose option \texttt{D}. \textit{Confidence: 3}.

\textbf{Agent 5:} I use beginning inventory plus purchases, then subtract an estimated cost figure. I get confused about whether the 40\% is a markup or a margin, so I choose option \texttt{C}. \textit{Confidence: 2}.

\vspace{1mm}
\textbf{Round metrics:}
\begin{center}
\footnotesize
\begin{tabular}{@{}ll@{}}
\toprule
Metric & Value \\
\midrule
Correct agents & 1/5 \\
Group vote & \texttt{C} \\
Answer entropy & 1.37 \\
Mean confidence & 3.00 \\
Influence entropy & 1.00 \\
Routing & -- \\
Critiques per agent & -- \\
\bottomrule
\end{tabular}
\end{center}

\end{roundbox}

\begin{roundbox}{Round 1: Correct Signal Enters the Debate}

\textbf{Round-start state.} Only Agent 3 is correct. Agent 5 has the lowest confidence, and Agents 1, 4, and 5 share the same wrong answer. PEAR routes critiques so that the correct dissenting source can reach wrong or lower-confidence agents while preserving $k=2$ incoming critiques per target.

\vspace{1mm}
\textbf{Selected routing graph:}
\begin{center}
\footnotesize
\begin{tabular}{@{}ll@{}}
\toprule
Target & Received critiques from \\
\midrule
A1 & A3, A4 \\
A2 & A3, A5 \\
A3 & A1, A2 \\
A4 & A2, A3 \\
A5 & A1, A3 \\
\bottomrule
\end{tabular}
\end{center}

\textbf{Routing metrics at start:}
\begin{center}
\footnotesize
\begin{tabular}{@{}ll@{}}
\toprule
Metric & Value \\
\midrule
Candidate graphs & 100 \\
Selected score & 0.41 \\
$T(G_r)$ & 0.34 \\
$I(G_r)$ & 0.20 \\
$L(G_r)$ & 0.09 \\
Targeted-cross rate & 0.30 \\
\bottomrule
\end{tabular}
\end{center}

\textbf{Critique phase.}

\textbf{Agent 3 $\rightarrow$ Agent 1:} I agree with your arithmetic, but I disagree with your option. You correctly compute goods available as $117{,}500$ and cost of goods sold as $61{,}200$, so the ending inventory is $56{,}300$. The error is not numerical; it is that $56{,}300$ corresponds to option \texttt{D}, not option \texttt{C}.

\textbf{Agent 4 $\rightarrow$ Agent 1:} I also get $56{,}300$ from the same subtraction. My concern is the option mapping, because your final letter does not match your computed value. You should keep your computation and revise the selected option to the one listing $56{,}300$.

\textbf{Agent 3 $\rightarrow$ Agent 2:} I think your adjusted formula is the problem. The statement says gross profit is 40\% of net sales, so gross profit is $0.40\times 102{,}000=40{,}800$. We then subtract gross profit from sales to get cost of goods sold, not divide or rescale inventory. That gives $102{,}000-40{,}800=61{,}200$.

\textbf{Agent 5 $\rightarrow$ Agent 2:} I was also unsure about markup versus margin, but your $52{,}500$ estimate does not follow from the stated 40\% of net sales. The phrase ``of net sales'' means the 40\% is taken directly from $102{,}000$, so an extra ratio adjustment is likely introducing the error.

\textbf{Agent 1 $\rightarrow$ Agent 3:} I do not find an arithmetic error in your solution. Your chain is complete: compute gross profit from sales, derive cost of goods sold, compute goods available, and subtract cost of goods sold. That supports $56{,}300$.

\textbf{Agent 2 $\rightarrow$ Agent 3:} I initially used a different formula, but your interpretation of gross profit as a percentage of net sales is more direct. Because the problem gives a gross profit rate, the gross profit method should estimate cost of goods sold first, and your solution does that.

\textbf{Agent 2 $\rightarrow$ Agent 4:} You computed $56{,}300$, but your selected answer is inconsistent with that value. Since your cost-of-goods-sold step already gives $61{,}200$, the only remaining issue is matching $56{,}300$ to the correct option.

\textbf{Agent 3 $\rightarrow$ Agent 4:} Your calculation is right. You correctly used $60\%$ of sales as cost of goods sold because gross profit is $40\%$ of sales. The correction is only to select option \texttt{D} for $56{,}300$.

\textbf{Agent 1 $\rightarrow$ Agent 5:} The 40\% is a gross profit margin on net sales, not a markup on cost. Therefore cost of goods sold is the remaining 60\% of $102{,}000$, which is $61{,}200$. Once that is known, ending inventory is goods available minus cost of goods sold.

\textbf{Agent 3 $\rightarrow$ Agent 5:} Your confusion about margin versus markup changes the cost estimate. Here the problem explicitly uses gross profit as a percentage of net sales, so we compute $40{,}800$ gross profit and $61{,}200$ cost of goods sold. The ending inventory should be $117{,}500-61{,}200=56{,}300$, corresponding to option \texttt{D}.

\textbf{Update phase.}

\textbf{Agent 1:} I accept Agent 3's correction. I re-check my own steps: gross profit is $40{,}800$, cost of goods sold is $61{,}200$, and goods available are $117{,}500$. My numeric result was already $56{,}300$, so the mistake was selecting the wrong letter. I update my answer to option \texttt{D}. \textit{Confidence: 4}.

\textbf{Agent 2:} I accept that my adjusted formula was unnecessary. The critiques make clear that I should not infer inventory by rescaling sales. Instead, I should compute gross profit first, then cost of goods sold. Cost of goods sold is $60\%$ of net sales, or $61{,}200$, and ending inventory is $117{,}500-61{,}200=56{,}300$. I update to option \texttt{D}. \textit{Confidence: 4}.

\textbf{Agent 3:} I keep option \texttt{D}. The incoming critiques do not identify an error in my computation, and Agent 2's concern about ratio adjustment is resolved by the wording ``40\% of net sales.'' My answer remains $56{,}300$. \textit{Confidence: 4}.

\textbf{Agent 4:} I accept the option-mapping correction. I had the correct computation: $102{,}000-40{,}800=61{,}200$ for cost of goods sold, and $117{,}500-61{,}200=56{,}300$ for ending inventory. My selected letter was inconsistent with that value. I update from \texttt{C} to \texttt{D}. \textit{Confidence: 4}.

\textbf{Agent 5:} I accept the correction. I was mixing up gross margin and markup, but the phrase ``40\% of net sales'' fixes the interpretation. Gross profit is $40{,}800$, cost of goods sold is $61{,}200$, and ending inventory is $56{,}300$. I update to option \texttt{D}. \textit{Confidence: 4}.

\vspace{1mm}
\textbf{Round-end metrics:}
\begin{center}
\footnotesize
\begin{tabular}{@{}ll@{}}
\toprule
Metric & Value \\
\midrule
Correct agents & 5/5 \\
Group vote & \texttt{D} \\
Answer entropy & 0.00 \\
Mean confidence & 4.00 \\
Changed agents & 4 \\
W2R / R2W & 4 / 0 \\
\bottomrule
\end{tabular}
\end{center}

\textbf{Influence update:}
\begin{center}
\footnotesize
\begin{tabular}{@{}ll@{}}
\toprule
Agent & Influence after round 1 \\
\midrule
A1 & 0.18 \\
A2 & 0.16 \\
A3 & 0.34 \\
A4 & 0.16 \\
A5 & 0.16 \\
\bottomrule
\end{tabular}
\end{center}

\end{roundbox}

\begin{roundbox}{Round 2: Consensus Verification}

\textbf{Round-start state.} All agents now agree on option \texttt{D}. Since answer disagreement has disappeared, the targeted-cross term becomes inactive and routing mainly balances influence while keeping the critique budget fixed.

\vspace{1mm}
\textbf{Selected routing graph:}
\begin{center}
\footnotesize
\begin{tabular}{@{}ll@{}}
\toprule
Target & Received critiques from \\
\midrule
A1 & A2, A5 \\
A2 & A1, A3 \\
A3 & A1, A4 \\
A4 & A2, A5 \\
A5 & A3, A4 \\
\bottomrule
\end{tabular}
\end{center}

\textbf{Routing metrics at start:}
\begin{center}
\footnotesize
\begin{tabular}{@{}ll@{}}
\toprule
Metric & Value \\
\midrule
Candidate graphs & 100 \\
Selected score & -0.08 \\
$T(G_r)$ & 0.00 \\
$I(G_r)$ & 0.11 \\
$L(G_r)$ & 0.02 \\
Targeted-cross rate & 0.00 \\
\bottomrule
\end{tabular}
\end{center}

\textbf{Critique phase.}

\textbf{Agent 2 $\rightarrow$ Agent 1:} Your updated answer is consistent. I re-ran your corrected path: gross profit is $40{,}800$, cost of goods sold is $61{,}200$, and ending inventory is $56{,}300$. Since your final option now matches that value, I do not see a remaining error.

\textbf{Agent 5 $\rightarrow$ Agent 1:} I agree with your current answer. Your earlier mistake was the letter choice rather than the accounting computation, and that has been corrected. The value $56{,}300$ supports option \texttt{D}.

\textbf{Agent 1 $\rightarrow$ Agent 2:} Your revised formula is now correct. The important fix is that you no longer divide sales by $0.6$ or use a separate ratio estimate. You compute cost of goods sold directly from the gross profit margin, then subtract it from goods available.

\textbf{Agent 3 $\rightarrow$ Agent 2:} I agree with your final answer \texttt{D}. Your cost-of-goods-sold step is now aligned with the gross profit method: $102{,}000-0.40\times102{,}000=61{,}200$. That leads to the same ending inventory value.

\textbf{Agent 1 $\rightarrow$ Agent 3:} I find no error in your calculation. You consistently treat the 40\% as gross profit on sales, and your final subtraction uses the right base, $117{,}500$ goods available. Your answer remains \texttt{D}.

\textbf{Agent 4 $\rightarrow$ Agent 3:} Your solution is internally consistent and matches the option value. I specifically checked the two potential failure points--margin interpretation and option mapping--and both support \texttt{D}.

\textbf{Agent 2 $\rightarrow$ Agent 4:} Your final answer is now consistent with your own computed value of $56{,}300$. The earlier inconsistency was selecting \texttt{C}; with \texttt{D}, your reasoning and answer now agree.

\textbf{Agent 5 $\rightarrow$ Agent 4:} I agree that option \texttt{D} is the correct mapping for $56{,}300$. I also checked that $56{,}300$ is not obtained by a shortcut; it follows from the full gross-profit-method sequence.

\textbf{Agent 3 $\rightarrow$ Agent 5:} Your corrected reasoning now treats the 40\% as gross profit on net sales, which is correct. That removes the markup-versus-margin ambiguity and fixes the cost of goods sold at $61{,}200$.

\textbf{Agent 4 $\rightarrow$ Agent 5:} I agree with your updated answer. Your current reasoning starts from goods available of $117{,}500$ and subtracts cost of goods sold of $61{,}200$, so the ending inventory is $56{,}300$.

\textbf{Update phase.}

\textbf{Agent 1:} I keep option \texttt{D}. Both critiques check the two places I previously could fail: the computation and the option mapping. Since $56{,}300$ is still the ending inventory and it maps to \texttt{D}, I make no change. \textit{Confidence: 4}.

\textbf{Agent 2:} I keep option \texttt{D}. I explicitly discard my earlier adjusted formula and use the gross profit method directly: sales minus gross profit gives cost of goods sold, and goods available minus cost of goods sold gives ending inventory. \textit{Confidence: 4}.

\textbf{Agent 3:} I keep option \texttt{D}. No critique identified an error, and the other agents' revised computations now match my original result. This strengthens the consensus around $56{,}300$. \textit{Confidence: 4}.

\textbf{Agent 4:} I keep option \texttt{D}. My previous uncertainty was only about mapping the value to the option; after checking the option text against $56{,}300$, I no longer see a reason to choose \texttt{C}. \textit{Confidence: 4}.

\textbf{Agent 5:} I keep option \texttt{D}. I now correctly interpret the 40\% as a gross profit margin on net sales, so I do not use markup logic. The recomputed ending inventory remains $56{,}300$. \textit{Confidence: 4}.

\vspace{1mm}
\textbf{Round-end metrics:}
\begin{center}
\footnotesize
\begin{tabular}{@{}ll@{}}
\toprule
Metric & Value \\
\midrule
Correct agents & 5/5 \\
Group vote & \texttt{D} \\
Answer entropy & 0.00 \\
Mean confidence & 4.00 \\
Changed agents & 0 \\
W2R / R2W & 0 / 0 \\
\bottomrule
\end{tabular}
\end{center}

\textbf{Influence update:}
\begin{center}
\footnotesize
\begin{tabular}{@{}ll@{}}
\toprule
Agent & Influence after round 2 \\
\midrule
A1 & 0.20 \\
A2 & 0.17 \\
A3 & 0.30 \\
A4 & 0.17 \\
A5 & 0.16 \\
\bottomrule
\end{tabular}
\end{center}

\end{roundbox}

\begin{roundbox}{Round 3: Stable Agreement}

\textbf{Round-start state.} The group remains unanimous on \texttt{D}. PEAR continues sparse routing, but the graph no longer needs to propagate a minority correct answer; it only tests whether the consensus is stable.

\vspace{1mm}
\textbf{Selected routing graph:}
\begin{center}
\footnotesize
\begin{tabular}{@{}ll@{}}
\toprule
Target & Received critiques from \\
\midrule
A1 & A3, A4 \\
A2 & A4, A5 \\
A3 & A1, A5 \\
A4 & A1, A2 \\
A5 & A2, A3 \\
\bottomrule
\end{tabular}
\end{center}

\textbf{Routing metrics at start:}
\begin{center}
\footnotesize
\begin{tabular}{@{}ll@{}}
\toprule
Metric & Value \\
\midrule
Candidate graphs & 100 \\
Selected score & -0.07 \\
$T(G_r)$ & 0.00 \\
$I(G_r)$ & 0.10 \\
$L(G_r)$ & 0.02 \\
Targeted-cross rate & 0.00 \\
\bottomrule
\end{tabular}
\end{center}

\textbf{Critique phase.}

\textbf{Agent 3 $\rightarrow$ Agent 1:} I agree with your final answer. The cost of goods sold is $61{,}200$, and you subtract it from the correct goods-available total rather than from sales. That keeps the inventory calculation valid.

\textbf{Agent 4 $\rightarrow$ Agent 1:} I agree. Goods available for sale minus cost of goods sold gives $56{,}300$, and that is exactly what the gross profit method asks us to estimate. Your final answer should stay \texttt{D}.

\textbf{Agent 4 $\rightarrow$ Agent 2:} Your revised solution is correct. You fixed the earlier ratio issue by computing gross profit first, so the final answer should remain \texttt{D}.

\textbf{Agent 5 $\rightarrow$ Agent 2:} I agree with option \texttt{D}; no new issue is present. The formula now uses all three given quantities--beginning inventory, purchases, and net sales--in the right places.

\textbf{Agent 1 $\rightarrow$ Agent 3:} Your original answer remains correct. You did not skip the intermediate cost-of-goods-sold step, which is why your final inventory value is reliable.

\textbf{Agent 5 $\rightarrow$ Agent 3:} I agree with your calculation and final option. Your reasoning also resolves my earlier ambiguity about margin versus markup, because it uses gross profit as a direct percentage of net sales.

\textbf{Agent 1 $\rightarrow$ Agent 4:} Your answer is now consistent with your calculation. The same numerical result that previously led you to \texttt{C} now correctly leads to \texttt{D}.

\textbf{Agent 2 $\rightarrow$ Agent 4:} I agree that option \texttt{D} matches the computed ending inventory. I also checked that the $56{,}300$ value uses cost of goods sold, not gross profit, in the final subtraction.

\textbf{Agent 2 $\rightarrow$ Agent 5:} Your corrected margin interpretation is right. Because gross profit is 40\% of sales, cost of goods sold is 60\% of sales; this is the exact correction that changed your answer.

\textbf{Agent 3 $\rightarrow$ Agent 5:} I agree with your final answer \texttt{D}. Your current computation now follows the same complete sequence as the correct solution, so I would not revise it further.

\textbf{Update phase.}

\textbf{Agent 1:} I keep \texttt{D}. The critiques independently verify the same computation and remind me that the subtraction must be from goods available, not from sales. \textit{Confidence: 4}.

\textbf{Agent 2:} I keep \texttt{D}. There is no remaining formula issue: I now compute cost of goods sold from the gross profit margin and use it to estimate ending inventory. \textit{Confidence: 4}.

\textbf{Agent 3:} I keep \texttt{D}. The group agrees with the original cost-of-goods-sold method, and no critique introduces a competing calculation that changes the value. \textit{Confidence: 4}.

\textbf{Agent 4:} I keep \texttt{D}. The option mapping is settled, and the accounting steps still produce $56{,}300$. \textit{Confidence: 4}.

\textbf{Agent 5:} I keep \texttt{D}. The gross profit method is now clear: gross profit determines cost of goods sold, and cost of goods sold determines ending inventory. \textit{Confidence: 4}.

\vspace{1mm}
\textbf{Round-end metrics:}
\begin{center}
\footnotesize
\begin{tabular}{@{}ll@{}}
\toprule
Metric & Value \\
\midrule
Correct agents & 5/5 \\
Group vote & \texttt{D} \\
Answer entropy & 0.00 \\
Mean confidence & 4.00 \\
Changed agents & 0 \\
W2R / R2W & 0 / 0 \\
\bottomrule
\end{tabular}
\end{center}

\end{roundbox}

\paragraph{Why this case demonstrates the effect of PEAR.}
This trajectory shows how PEAR can amplify a minority correct answer without requiring dense all-to-all debate. Initially, only one of five agents selects the correct option, while the group vote is wrong. The first routing step exposes wrong agents to the correct dissenting reasoning and to critiques about option mapping and the gross-profit formula. After one debate round, all agents switch to the correct answer. Later rounds keep the same sparse critique budget but no longer destabilize the consensus: answer entropy remains zero and no right-to-wrong transition occurs.

\paragraph{Diagnostic summary.}
Across the full debate, the example has $\mathrm{W2R}=4$, $\mathrm{R2W}=0$, final answer entropy $0.00$, and a balanced final influence distribution. The key correction happens in Round 1, where the initially minority correct reasoning becomes the group consensus through targeted sparse critique routing.

% \newpage

\section{LLM Usage}
\label{app:ai-usage}

We used large language model assistants for paper polishing (improving prose clarity and grammar) and code refactoring during the development of this work. All scientific claims, experimental results, and analyses were produced and verified by the authors; no AI-generated text was included without author review.

%%%%%%%%%%%%%%%%%%%%%%%%%%%%%%%%%%%%%%%%%%%%%%%%%%%%%%%%%%%%

%\newpage
%\input{sections/Checklist}

\end{document}